	\setlist[description]{leftmargin=\parindent,labelindent=\parindent}
	\tikzset{thick/.style={line width=.6mm}}
	\tikzstyle{hugetensor}=[rounded rectangle,thick,draw=black,minimum width=40mm,minimum height = 3mm]
	\tikzstyle{squaretensor}=[rounded rectangle,thick,draw=black,minimum width=15mm,minimum height = 7mm]
	\tikzstyle{littletensor}=[circle,thick,draw=black,fill=red!30,inner sep=0pt,minimum size=6pt]
	\tikzstyle{tinytensor}=[circle,thick,draw=black,fill=red!30,inner sep=0pt,minimum size=6pt]
\newcommand{\<}{\langle}
\renewcommand{\>}{\rangle}
\renewcommand{\Pr}{\text{Pr}}
\DeclareMathOperator{\id}{id}
\DeclareMathOperator{\End}{End}
\DeclareMathOperator{\tr}{tr}
\newenvironment{bsmallmatrix}
  {\left[\begin{smallmatrix}}
  {\end{smallmatrix}\right]}
\DeclareMathAlphabet{\mathcal}{OMS}{cmsy}{m}{n}
\newtheorem{theorem}{Theorem}[section]
\newtheorem{corollary}{Corollary}[section]
\newtheorem{proposition}{Proposition}[section]
\newtheorem{lemma}{Lemma}[section]
\theoremstyle{definition}
\newtheorem{example}{Example}
\newtheorem{definition}{Definition}[section]
\begin{document}

\title{Language Modeling with Reduced Densities}
\date{}
\author{Tai-Danae Bradley}
\orcid{0000-0003-2995-5161}
\email{tai.danae@math3ma.com}
\affiliation{Sandbox@Alphabet, Mountain View, CA 94043, USA}
\author{Yiannis Vlassopoulos}
\email{yiannis@tunnel.tech}
\affiliation{Tunnel, New York, NY 10021, USA}
\maketitle

\begin{abstract}
This work originates from the observation that today's state-of-the-art statistical language models are impressive not only for their performance, but also---and quite crucially---because they are built entirely from correlations in unstructured text data. The latter observation prompts a fundamental question that lies at the heart of this paper: \textit{What mathematical structure exists in unstructured text data?} We put forth enriched category theory as a natural answer. We show that sequences of symbols from a finite alphabet, such as those found in a corpus of text, form a category enriched over probabilities. We then address a second fundamental question: \textit{How can this information be stored and modeled in a way that preserves the categorical structure?} We answer this by constructing a functor from our enriched category of text to a particular enriched category of reduced density operators. The latter leverages the Loewner order on positive semidefinite operators, which can further be interpreted as a toy example of entailment.
\end{abstract}

\maketitle
\tableofcontents

\section{Introduction}\label{sec:intro}
Statistical language models attempt to learn syntactic and semantic structure in language using the statistics of the language. Great progress has been made recently using neural network architectures  \cite{radford2018, vaswani2017, 2015arXiv150203509G, devlin2019}, although the problem of modeling the meaning of text---demonstrated, for instance, by a model's ability to answer questions---is still unsolved. From a mathematical perspective, a number of questions remain unanswered: What mathematical structure captures the meaning of expressions in a natural language? How much of this structure can be sufficiently detected with corpora of text? Is there a way to naturally mine abstract concepts and their interrelations? How do logic and propositional entailment arise? Even so, today's state of the art statistical language models are quite impressive for being built only from correlations in unstructured text data. This observation prompts a fundamental question that lies at the heart of this paper: \textit{What mathematical structure exists in unstructured text data?} We propose that enriched category theory provides a natural home for the answer. In particular we show that sequences of symbols from a finite alphabet, such as those found in a corpus of text, form a category enriched over probabilities. Category theory thus gives a principled means of organizing ``what goes with what'' in a corpus of text along with the statistics of the resulting expressions---precisely the information used as input to today's statistical language models. Equipped with this understanding, we then turn to another fundamental question: \textit{How can this information be stored and modeled in a way that preserves the categorical structure?} In other words, what is a representation of this mathematical structure? We propose the answer lies in a functor from our enriched category of text to a particular enriched category of linear operators. Unwinding the details is the primary goal of this paper.

We have been led to these tools from a number of independent, yet complementary, viewpoints on mathematical structure in natural language. On meaning, we take inspiration from the Yoneda lemma in category theory, which informally states that a mathematical object is completely determined by the network of relationships that object has with other objects in its environment. In this way, two objects are isomorphic if and only if they share the same network of relationships. Putting this in the context of language, we view the meaning of a word as captured by the network of ways that word fits into other expressions in the language. In the words of linguist John Firth, ``You shall know a word by the company it keeps'' \cite{firth57synopsis}. Distinguishing the meanings of words thus amounts to distinguishing the environments in which they occur \textit{in addition to}, as we put forth in this paper, the statistics of these occurrences.  Towards marrying these ideas, another viewpoint we take is that language exhibits both algebraic (or compositional) and statistical structure. Intuitively, language can be viewed as an algebra whose elements are expressions in the language and where the product of two expressions is their concatenation if the result is meaningful and is zero otherwise. Computing a representation of this algebra using statistical data in real-world text leads one directly to tensor networks \cite{PTV2017}, which are factorizations of high-dimensional tensors often used for modeling states of complex quantum many body systems \cite{orus2019,Bridgeman2017,Biamonte2019,penrose71,schollwock}, a point revisited below.

What's more, our algebraic viewpoint is not incompatible with our category theoretical one. For instance, the ``network of ways a word fits into other expressions'' may be identified with the two-sided ideal of that word. If the algebra were commutative, then we could think of language as a coordinate algebra on its spectrum whose points are the (prime) ideals of the algebra, and language would be considered as the coordinate algebra on the space of meanings (and so a translation would be a change of coordinates). A non-commutative version of this leads one to considering a category of modules, though this is a something to be studied in the future. Even so, algebraic structure alone is not sufficient to understand mathematical structure in language. Statistical features also play a vital role. Indeed, language exhibits long-distance correlations decaying with a power-law, and small-scale perturbations can propagate to all scales. For instance, changing a single word in an expression can change the entire meaning of the text: \textit{I'm going to the post office} and \textit{I'm going postal} have drastically different meanings.  Such features are also characteristic of quantum critical systems \cite{Lin_2017}, which are efficiently modeled by certain tensor networks. Inspiration also comes from a linguistic perspective, as Chomsky's linguistic theory of generative grammars leads to tensor networks as soon as one tries to make them probabilistic \cite{gallego2019language, DeGiuli_2019}.

The primary contributions of this work are theoretical, but let us emphasize a salient point about practical implementations. As alluded to above, the statistics in language observed in corpora of text resembles the same statistics observed in one-dimensional quantum critical systems, and the ground states of the latter are known to be well approximated by low rank tensor factorizations---see \cite[and references therein]{Lin_2017} as well as \cite{scaling_laws2020,gallego2019language,vidal_evenbly,PTV2017,PV2017}. With this observation in mind, we therefore work under the premise that the linear operators in our framework can be approximated efficiently by tensor networks. Indeed, the ability of tensor network methods and algorithms to efficiently handle and store data in ultra large-dimensional spaces is well-understood---reviewing it here is beyond the scope of this paper, but see \cite{roberts2019,orus2019,orus2014,Oseledets2011} and references therein. Further note that such techniques have found a increasing success in machine learning in recent years, including anomaly detection, image classification, audio classification, language modeling, and more \cite{wang2020,stoud_schwab,Stokes:2019,Miller2020,Reyes2020,roberts2019,martyn2020,Glasser:2018,Guo:2018,cohen16}. 

With this motivation in hand, the paper is organized as follows. Our model begins by viewing language as sequences from some finite vocabulary and by viewing the statistics of language as modeled by a probability distribution on this set. Section \ref{sec:densities} recalls a particular passage from classical to quantum probability by modeling any probability distribution on a finite set of sequences as a particular rank 1 density operator. Section \ref{ssec:connection} focuses on the corresponding reduced density operators, which harness valuable statistical information about the original probability distribution. We review this information in the context of language. Section \ref{sec:text_to_density} builds on this framework to give the main construction, namely the assignment to any word or phrase $s$ a particular reduced density operator $\rho_s$. As seen in Corollary \ref{corollary}, this operator has the property that it decomposes as a weighted sum of reduced density operators---one associated to each meaningful expression in the language that contains $s$---where the weights are conditional probabilities of containment. As a result, $\rho_s$ captures something of the environment, or the ``meaning,'' of $s$ in a highly principled way. An immediate consequence is that the passage $s\mapsto\rho_s$ also preserves a certain hierarchical structure that is exhibited by expressions in language. In particular, Section \ref{sec:loewner} shows that both expressions in language and their associated operators form preordered sets. The former will be given by subsequence containment of consecutive symbols, for instance: \textit{red} $\leq$ \textit{red rose}. The latter will be given by the Loewner order, where we work with reduced densities $\hat\rho_s$ not normalized to unit trace. In addition to being a map of preorders, this assignment $s\mapsto \hat\rho_s$ also preserves statistics in a compatible way. Section  \ref{sec:category} makes this statement precise using the language of category theory. We show that both the preorder of expressions $s$ and the preorder of their associated operators $\hat\rho_s$ can be equipped with the structure of categories enriched over the unit interval. The main result in Theorem \ref{theorem} concludes that the passage $s\mapsto\hat\rho_s$ amounts to an enriched functor between these enriched categories.

\subsection{Related Work}
Tensor network language models have been explored previously \cite{zhang2018,zhang2019,gallego2019language}, though to our knowledge these efforts do not seek to identify the mathematical structure in unstructured text data, nor do they ask for a faithful representation of such structure or work under the hypothesis that tensor networks are candidate representations of it. This foundational perspective is also absent from quantum language models such as \cite{basile-tamburini-2017,Sordoni2013,Li2016,chen_pan_dong2020,zhang_niu2018,li-etal-2019-cnm}. Another line of work is \cite{clark2010}, which details a categorical compositional distributional (DisCo) framework for language. The authors of \cite{Bankova_Coecke_Lewis_Marsden_2019} build upon it to describe a density operator model for entailment using the Loewner order, while \cite{piedeleu2015} use densities to model homonymy. More recently, density operators and neural methods come together in \cite{meyer-lewis-2020-modelling}, while related works on modeling entailment with densities include \cite{balkir2015,mehrnoosh2018} and references therein. Notably, DisCo models requires a choice of grammatical structure as input, which is not the case in our framework. Indeed, motivated by the success of today's statistical language models trained only on unstructured text data, we propose to let statistics as a proxy for grammar. Lastly, the primary role of the Loewner order in our work is that it instantiates the existence of an enriched functor that preserves the mathematical structure present in corpora of text. This perspective is key to the work below and is absent from the approaches listed above.

The recipe in Equation \eqref{eq:projection} for passing from a probability distribution on a set of sequences to a rank 1 density operator appears in \cite{BST2019}, where it is key to a certain tensor network generative model. The passage was further elaborated on in the context of algebraic and statistical mathematical structure in \cite{TDB}, where a preliminary discussion of this paper's framework appears in Section 3.4. We first learned of the idea to consider language as a preorder from Misha Gromov in \cite{gromov2015}. After finishing this work we noticed the same article also advocates for a ``functor'' from a ``linguistic category'' to a ``small and simple category'' \cite[p. 59]{gromov2015}. Lastly, we occasionally use tensor network diagrams to illustrate certain constructions. The diagrams are much like category theorists' string diagrams, and we assume some familiarity. For an introduction to these visual representations, see \cite{tensornetwork,Biamonte2017,orus2014,evenbly} or \cite[Section 2.2.2]{TDB}.

\subsection{Acknowledgments} The authors thank Maxim Kontsevich, Jacob Miller, and John Terilla for helpful discussions, as well as the anonymous referee for their valuable feedback.
 
\section{Modeling Probability Distributions with Density Operators}\label{sec:densities}
Let $S$ be a finite set, and let $V=\mathbb{C}^S$ denote the free complex vector space generated by $S$. Concretely, the elements of $S$ define an orthonormal basis for $V$, and we will denote the basis vector associated to $s\in S$ using the same letter $s\in V$. If an ordering is chosen so that $S=\{s_1,\ldots,s_d\}$, then by identifying each $s_i$ with the $i$th standard basis vector in $\mathbb{C}^d$ we have an isomorphism $V\cong \mathbb{C}^d$. This space has the usual inner product $\<s_i,s_j\>$, which is equal to $1$ if $i=j$ and is $0$ otherwise. Each vector $v\in V$ defines a linear functional $v^*\colon V\to\mathbb{C}$ defined by $v'\mapsto\<v,v'\>$. We denote the vector space of such linear functionals on $V$ by $V^*:=\hom(V,\mathbb{C})$. Given a finite-dimensional space $W$, we may denote elements in the tensor product $V\otimes W$ with $vw$ in lieu of $v\otimes w.$ Note that each element $wv^*$ of the tensor product $W\otimes V^*$ corresponds to a linear map $V\to W$ defined by $v'\mapsto w\<v,v'\>$. In particular, let $\End(V)$ denote the space of linear operators on $V$. Then for any unit vector $\psi\in V$, the vector $\psi\psi^*\in V\otimes V^*$ corresponds to an operator in $\End(V)$ that maps $\psi$ to itself and maps any vector orthogonal to $\psi$ to $0$. We will denote this orthogonal projection operator by $\Pr_\psi$. 

A \emph{density operator}, or simply \emph{density}, $\rho$ on a Hilbert space is a unit-trace, positive semidefinite operator. We will denote the latter property by $\rho\geq 0$. Density operators are also called \emph{quantum states} and may be thought of as the quantum analogues of classical probability distributions. Indeed, every density $\rho$ on $V= \mathbb{C}^S$ defines a probability distribution $\pi_\rho\colon S\to\mathbb{R}$ on the set $S$ by the \emph{Born rule}, where the probability of an element $s\in S$ is defined by $\pi_\rho(s):=\<\rho s, s\>.$ These values are the diagonal entries of the matrix for $\rho$ in the basis provided by $S$. They are nonnegative since $\rho$ is positive semidefinite, and their sum is 1 since $\rho$ has unit trace. Going in the other direction, any probability distribution $\pi\colon S\to\mathbb{R}$ gives rise to a density on $V$ with the property that the Born distribution induced by it coincides with $\pi$. In fact there are multiple ways to define such a density. One could consider the maximal rank diagonal operator $\sum_s\pi(s)ss^*$, whose matrix representation contains the probabilities $\pi(s)$ along its diagonal and zeros elsewhere. In what follows, however, we focus on a rank 1 density operator---namely orthogonal projection onto a particular unit vector. Concretely, consider the following unit vector in $V$,
\begin{equation}\label{eq:psi}
\psi=\sum_{s\in S}\sqrt{\pi(s)}s
\end{equation}
and let $\Pr_\psi\colon V\to V$ denote the orthogonal projection operator onto $\psi$. This unit-trace operator is positive semidefinite and satisfies $\pi_{\Pr_\psi}(s)=\<\Pr_\psi s,s\>=\sqrt{\pi(s)}^2=\pi(s)$ as claimed. The assignment $\pi\mapsto \Pr_\psi$ provides for us a key passage from classical to quantum probability whose significance is seen when the probability distribution being modeled is a joint distribution. We elaborate below.

\subsection{Understanding Reduced Densities}\label{sec:red_densities}
Given finite-dimensional vector spaces $A$ and $B$, there is an isomorphism $\End(A\otimes B)\cong \End(A)\otimes \End(B)$, and the trace defines a pair of natural linear maps $\tr_A:=\tr\otimes \id_B$ and $\tr_B:=\id_A\otimes \tr$ called \emph{partial traces} from the tensor product of the endomorphism spaces to each factor. The partial trace preserves both trace and positive semidefiniteness, and so any density operator $\rho\colon A\otimes B\to A\otimes B$ gives rise to \emph{reduced density operators}  $\rho_B:=\tr_A\rho\colon B\to B$ and $\rho_A:=\tr_B\rho\colon A\to A$. These operators may be thought of as the quantum analogues of marginal probability distributions. The analogy is especially clear when the original density $\rho$ is the orthogonal projection onto a unit vector defined by a joint probability distribution. To see this, suppose $S=X\times Y$ for finite ordered sets $X=\{x_1,\ldots,x_n\}$ and $Y=\{y_1,\ldots,y_m\}$, and let $\pi\colon S\to \mathbb{R}$ be any joint probability distribution. As before, this defines the orthogonal projection operator  $\Pr_\psi\colon \mathbb{C}^X\otimes\mathbb{C}^Y\to \mathbb{C}^X\otimes\mathbb{C}^Y$ onto the unit vector
\begin{equation}\label{eq:psi2}
\psi=\sum_{i,a}\psi_{ia}x_i y_a
\end{equation}
where $\psi_{ia}=\sqrt{\pi(x_i,y_a)}$ as in Equation \eqref{eq:psi}. Explicitly, 
\begin{equation}\label{eq:projection}
\Pr_\psi=\psi\psi^*=\left(\sum_{i,a}\psi_{ia}x_i y_a\right)\left(\sum_{j,b}\overline{\psi}_{jb}x_j^* y_b^*\right)
= \sum_{\substack{i,a\\ j,b}} \psi_{ia}\overline{\psi}_{jb}x_ix_j^*y_a y_b^*.
\end{equation}
An application of the partial trace yields the reduced density operator $\rho_Y\colon \mathbb{C}^Y\to\mathbb{C}^Y$, which has the following expression,
\begin{align}\label{eq:rho_y}
\rho_Y=\tr_X\Pr_\psi 
= \sum_{\substack{ i,a \\ j,b }} \psi_{ia}\overline{\psi}_{jb}\tr_X(x_ix_j^* y_a y_b^*)
= \sum_{\substack{ i,a \\ j,b }} \psi_{ia}\overline{\psi}_{jb}\tr(x_ix_j^*) \cdot y_a y_b^* 
=\sum_{\substack{ a,b\\ i }} \psi_{ia}\overline{\psi}_{ib}y_a y_b^*
\end{align}
where the last equality follows from $ \tr(x_ix_j^*)=\<x_j,x_i\>$, which is 1 if $i=j$ and is 0 otherwise. Notice that the $a$th diagonal entry of $\rho_Y$ is marginal probability $\pi_Y(y_a):=\sum_i \psi_{ia}\overline{\psi_{ia}}=\sum_{i}\pi(x_i,y_a)$, and so the diagonal of $\rho_Y$ recovers the marginal probability distribution $\pi_Y\colon Y\to\mathbb{R}$ obtained from the joint distribution $\pi$.
\[
    \rho_Y = 
    \begin{bmatrix}
    \pi_Y(y_1) & & & &*\\
    & \pi_Y(y_2) & & & \\
    & & \ddots && \\
    * & & & & \pi_Y(y_m)
    \end{bmatrix}
\]
The $ab$th off-diagonal entry of this matrix is $(\rho_Y)_{ab}=\sum_{i}\sqrt{\pi(x_i,y_a)\pi(x_i,y_b)}$ which is generally nonzero and measures the extent to which $y_a$ and $y_b$ have common interactions with elements in $X$. This can been expressed in terms of Bhattacharyya coefficients, which measure the proximity of two probability distributions. Unwinding this, the \emph{Bhattacharyya coefficient} for two probability distributions $p,q\colon S\to\mathbb{R}$ on a finite set $S$ is defined by $B(p,q):=\sum_s\sqrt{p(s)q(s)}$. Putting this in the context of reduced densities, each suffix $y_a\in Y$ defines a conditional probability distribution $\pi_a\colon X\to\mathbb{R}$ by $x_i\mapsto \pi(x_i|y_a)$, and so the Bhattacharyya coefficient of two conditional distributions $\pi_a$ and $\pi_b$ is equal to
\[
\sum_i\sqrt{\pi(x_i|y_a)\pi(x_i|y_b)}=\frac{1}{\sqrt{\pi_Y(y_a)\pi_Y(y_b)}}\sum_i\sqrt{\pi(x_i,y_a)\pi(x_i,y_b)} = \frac{(\rho_Y)_{ab}}{\sqrt{\pi_Y(y_a)\pi_Y(y_b)}}
\]
and the off-diagonals of the reduced density are therefore $(\rho_Y)_{ab}=\sqrt{\pi_Y(y_a)\pi_Y(y_b)}B(\pi_a,\pi_b).$ This may also be written using the \emph{Hellinger distance}, $H(p,q):=\sqrt{1-B(p,q)}$. In much the same way, the reduced density $\rho_X=\tr_Y\Pr_\psi$ contains the marginal probability distribution $\pi_X\colon X\to\mathbb{R}$ on $X$ along its diagonal and has additional nonzero off-diagonal entries. In both cases, the off-diagonals encode statistical information about subsystem interactions, and the spectral information of these reduced densities is akin to conditional probability, as it carries sufficient information to reconstruct the original state $\Pr_\psi$. This idea is described in detail in \cite[chapter 3]{TDB} and in \cite{BST2019}, where understanding and harnessing this information is key to producing a tensor network generative model. In this paper, we explore the extent to which reduced densities obtained from classical probability distributions are useful in representing words and expressions in language.

\subsection{Why Densities for Language?}\label{ssec:connection}
To motivate the connection between reduced densities and language, this section gives a few elementary, yet illuminating, observations about these operators. We begin with some terminology. Given a pair $(x,y)\in X\times Y$, refer to $x$ as the \emph{prefix} of the pair and to $y$ as the \emph{suffix}. The first observation is that two suffixes have the same image under $\rho_Y$ if and only if they share the same set of prefixes with the same probabilities (and similarly for prefixes and $\rho_X$).

\begin{proposition}\label{prop:same_image}
Let $\pi\colon X\times Y\to\mathbb{R}$ be a probability distribution and let $\psi$ be the vector given in Equation \eqref{eq:psi2}. Suffixes $y_c$ and $y_d$ satisfy $\pi(x_i,y_c)=\pi(x_i,y_d)$ for all $i$ if and only if they have the same image under $\rho_Y=\tr_X\Pr_\psi$. 
\end{proposition}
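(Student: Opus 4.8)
The plan is to compute the action of $\rho_Y$ on the basis suffix vectors directly from its expression in Equation \eqref{eq:rho_y} and to reduce both sides of the equivalence to a single statement about the columns of the coefficient matrix $(\psi_{ia})$. First I would apply $\rho_Y=\sum_{a,b,i}\psi_{ia}\overline{\psi}_{ib}\,y_ay_b^*$ to the basis vector $y_c$; since the $y_b^*$ are dual to an orthonormal basis, the functional $y_b^*$ evaluates to $\<y_b,y_c\>=\delta_{bc}$, collapsing the sum over $b$ and yielding $\rho_Y y_c=\sum_a\big(\sum_i\psi_{ia}\overline{\psi}_{ic}\big)y_a$. Because each amplitude $\psi_{ia}=\sqrt{\pi(x_i,y_a)}$ is real and nonnegative, the conjugation is harmless, so writing $v_a:=(\psi_{ia})_i$ for the $a$th column we have $\rho_Y y_c=\sum_a\<v_a,v_c\> y_a$, and identically for $y_d$. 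Hence the equality $\rho_Y y_c=\rho_Y y_d$ is equivalent to $\<v_a,v_c\>=\<v_a,v_d\>$ for every index $a$.

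The forward implication is then immediate. If $\pi(x_i,y_c)=\pi(x_i,y_d)$ for all $i$, then $\psi_{ic}=\psi_{id}$ for all $i$, so $v_c=v_d$ as real vectors and in particular $\<v_a,v_c\>=\<v_a,v_d\>$ for every $a$; by the reduction above the two images coincide.

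For the converse I would exploit that $\rho_Y$ is exactly the Gram matrix of the real columns $v_a$. Setting $w:=v_c-v_d$, the hypothesis $\<v_a,v_c\>=\<v_a,v_d\>$ for all $a$ reads $\<v_a,w\>=0$ for every $a$. Since $w$ itself lies in the span of the $v_a$—indeed $w=v_c-v_d$ is a combination of two of them—taking the inner product of $w$ against itself gives $\<w,w\>=\<v_c,w\>-\<v_d,w\>=0$. Concretely this is the sum-of-squares identity $\sum_i(\psi_{ic}-\psi_{id})^2=0$, obtained by specializing the hypothesis to $a=c$ and $a=d$ and subtracting; nonnegativity of each summand forces $\psi_{ic}=\psi_{id}$, and therefore $\pi(x_i,y_c)=\pi(x_i,y_d)$, for all $i$.

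The only delicate point—and the crux of the argument—is this converse direction, where one must upgrade equality of \emph{images} (a condition ranging over all columns $a$) to pointwise equality of the two conditional-amplitude vectors. The key observation is that the nonnegative square-root amplitudes make $\rho_Y=M^{\dagger}M$ a genuine Gram matrix of the columns of $M=(\psi_{ia})$, so its kernel detects coincidence of columns; the trick of evaluating the hypothesis at the two distinguished indices $a=c,d$ to extract a vanishing sum of squares is what closes the gap, and it is precisely here that positive semidefiniteness of $\rho_Y$, rather than mere symmetry, is doing the work.
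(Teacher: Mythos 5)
Your proposal is correct, and in the converse direction it is actually more complete than the paper's own proof. The forward implication is handled the same way in both: substitute the hypothesis into the explicit formula $\rho_Y y_c=\sum_a\bigl(\sum_i\psi_{ia}\overline{\psi}_{ic}\bigr)y_a$ and observe the two images coincide. For the converse, however, the paper simply asserts the contrapositive (``if $\pi(x_i,y_c)\neq\pi(x_i,y_d)$ for some $i$, then $\rho_Y(y_c)\neq\rho_Y(y_d)$'') without argument, and this is not obvious: a general linear map can certainly send distinct basis vectors to the same image, so something specific to $\rho_Y$ must be invoked. Your Gram-matrix argument supplies exactly the missing justification. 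Writing $v_a=(\psi_{ia})_i$ for the columns of the amplitude matrix, the hypothesis $\rho_Y y_c=\rho_Y y_d$ says $\langle v_a, v_c-v_d\rangle=0$ for every $a$; specializing to $a=c$ and $a=d$ and subtracting gives $\sum_i(\psi_{ic}-\psi_{id})^2=0$, which forces $\psi_{ic}=\psi_{id}$ pointwise since the amplitudes are real. You correctly flag that the nonnegativity of the square-root amplitudes (making $\rho_Y$ a genuine Gram matrix $M^{\dagger}M$) is what makes the argument work --- this is precisely the content the paper leaves implicit, and your proof would be a strict improvement if substituted for the original.
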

\begin{proof}
If $\pi(x_i,y_c)=\pi(x_i,y_d)$ for all $i$, then 
\[
\rho_Y(y_c)=\sum_{i,a}\sqrt{\pi(x_i,y_a)\pi(x_i,y_c)}y_a = \sum_{i,a}\sqrt{\pi(x_i,y_a)\pi(x_i,y_d)}y_a=\rho_Y(y_d).
\]
Conversely, if $\pi(x_i,y_c)\neq\pi(x_i,y_d)$ for some $i$, then $\rho_Y(y_c)\neq \rho_Y(y_d)$.
\end{proof}
\noindent Reduced densities therefore classify  suffixes (or prefixes in the case of $\rho_X$) that have the same environments and statistics within a language. For this reason, we refer to $\rho_Y(y_a)$ as the \textit{ambiance vector} for the suffix $y_a$, as suffixes with the same ambient environment have the same image under $\rho_Y.$ 

\begin{example}\label{ex:mountain}
Consider the ordered sets $X=\{\text{big, tall, cold, chilly}\}$ and $Y=\{\text{mountain, winter}\}$ and suppose $T\subseteq X\times Y$ is the four-element subset
\[T=\{(\text{big, mountain}), (\text{tall, mountain}), (\text{cold, winter}), (\text{chilly, winter}) \}.
\]
Let $\pi\colon X\times Y\to\mathbb{R}$ be the probability distribution uniformly concentrated on $T$ so that $\pi(x,y)$ is $1/4$ if $(x,y)\in T$ and is 0 otherwise. The vector in Equation \eqref{eq:psi2} is a sum of all phrases in $T$, each weighted by the square root of its probability. 
\[\psi=\tfrac{1}{2}(\text{big}\otimes \text{mountain} + \text{tall}\otimes \text{mountain} + \text{cold}\otimes \text{winter} + \text{chilly}\otimes \text{winter})\]
Following Equation \eqref{eq:rho_y}, the matrix representations of the reduced density operators obtained from orthogonal projection onto $\psi$ are given below.
\[
\rho_X = 
\frac{1}{4}
\begin{bmatrix}
1&1&0&0\\
1&1&0&0\\
0&0&1&1\\
0&0&1&1\\
\end{bmatrix}
\qquad
\rho_Y=
\frac{1}{4}
\begin{bmatrix}
2&0\\
0&2
\end{bmatrix}
\]
Observe that\footnote{Recall that elements of $Y$ are identified with standard basis vectors in $\mathbb{C}^Y\cong\mathbb{C}^2$, so $\text{mountain}=\begin{bsmallmatrix}1&0\end{bsmallmatrix}^\top$ while $\text{winter}=\begin{bsmallmatrix}0&1\end{bsmallmatrix}^\top$. Similarly, $\text{big}=\begin{bsmallmatrix}1&0&0&0\end{bsmallmatrix}^\top$, and so on.} the words \textit{mountain} and \textit{winter} share no common prefixes in $T$, and correspondingly $\rho_Y(\text{mountain})\neq\rho_Y(\text{winter})$. Intuitively, these words appear in different contexts and have different meanings, and $\rho_Y$ distinguishes them as such. On the other hand, the words \textit{big} and \textit{tall} have the same set of suffixes with identical probabilities, and  $\rho_X(\text{big})=\rho_X(\text{tall})$. Intuitively, these words have similar meanings because they appear in similar contexts, and $\rho_X$ identifies them as such.
\end{example}

This example highlights yet another connection between reduced densities and language---namely that the entries of their matrix representations have simple, combinatorial interpretations when $\pi$ is an empirical distribution. The diagonal entries of the matrix for $\rho_Y$ in Example \ref{ex:mountain} are both 2 (momentarily ignoring the factor of $1/4$) because both \textit{mountain} and \textit{winter} appear twice in the subset $T$. Importantly, the off-diagonal entries of $\rho_Y$ are zero, as \textit{mountain} and \textit{winter} have no common prefix in $T$. Similarly, the diagonals of $\rho_X$  count the number of prefixes in $T$, and the off-diagonals count the number of shared suffixes that any pair of prefixes have in common. More generally, any subset $T\subseteq X\times Y$ can be thought of as a sampling from a corpus of text and defines an empirical probability distribution $\pi\colon X\times Y\to\mathbb{R}$ by $\pi(x,y)=\frac{1}{|T|}$ if $(x,y)\in T$ and $\pi(x,y)=0$. The unit vector in Equation \eqref{eq:psi2} is then $\psi=\frac{1}{\sqrt{|T|}}\sum_{(x,y)\in T}x y$, and the $ab$th off-diagonal entry of $\rho_Y=\tr_X\Pr_\psi$ is given by $\sum_{i}\psi_{ia}\overline{\psi}_{ib}=d/|T|$ where $d$ is the number of prefixes $x_i\in X$ such that $(x_i,y_a)\in T$ and $(x_i,y_b)\in T$. A similar result holds for the reduced density $\rho_X$ on prefixes. This combinatorial observation was used in \cite{BST2019} and later elaborated on in \cite[chapter 3]{TDB}, though the application to language was not emphasized there. Let us emphasize it now. In the context of language, reduced densities neatly package the statistical information contained in their off-diagonal entries in terms of prefix-suffix interactions. As Proposition \ref{prop:same_image} has shown, this contributes to an understanding of how words fit into a language.

We take these simple observations as motivation to further explore the extent to which reduced densities arising from classical distributions can model language.  In Section \ref{sec:text_to_density} we assign to any word (or longer expression) $s$ in language a reduced density operator $\hat\rho_s$ obtained from $\Pr_\psi$, which will have the property that it contains algebraic and statistical information about the word's environment. This property, together with the Loewner order, is used in Section \ref{sec:loewner} to show that a simple concept hierarchy in language and the accompanying statistics are preserved under the passage $s\mapsto\hat\rho_s$. Section \ref{sec:category} describes how the preservation of this structure can be stated precisely in the language of category theory.

\section{Assigning Reduced Densities to Words}\label{sec:text_to_density}
To assign reduced density operators to words and expressions from a language, we start with a joint probability distribution as in Section \ref{sec:densities}. There, we considered a product of two sets, thought of as prefixes and suffixes. In this discussion, we'll consider a joint distribution on an $N$-fold product for $N\geq 2$. To this end, let $X$ be a finite ordered set consisting of the basic building blocks of a language. We'll refer to elements of $X$ as \emph{words}, though they may be characters, symbols, words, etc. Let $S=X^{N-1}\times X$ denote the set of all sequences of length $N\geq 2$. We write $S$ as a Cartesian product to obtain prefixes $(x_{i_{N-1}},\ldots, x_{i_2},x_{i_1})$ and suffixes $x_a$ so that each sequence $s\in S$ is a prefix-suffix pair $s=(x_{i_{N-1}}\cdots x_{i_2}x_{i_1},x_a)$. As shown here, the concatenation $x_{i_{N-1}}\cdots x_{i_2}x_{i_1}$ will often be used in lieu of the tuple $(x_{i_{N-1}},\ldots, x_{i_2},x_{i_1})$. Further, the indices of a prefix are labeled starting from right to left: the right-most index is $i_1$ and the left-most index is $i_{N-1}$. Consistent with Section \ref{sec:densities}, words comprising prefixes are labeled with $i,j,\ldots$ while suffixes are labeled with $a,b,\ldots$. The set of suffixes $X$ may be replaced by $X^k$ for any $k\geq 1$, though we work with $k=1$ for simplicity. Any subsequence of consecutive words in a prefix is called a \emph{phrase}, and a word is a phrase of length one.  With this setup in mind, suppose $\pi\colon S\to\mathbb{R}$ is any probability distribution and consider the unit vector $\psi=\sum_{s\in S}\psi_s s$ with $\psi_s=\sqrt{\pi(s)}$ for each $s.$ Note that $\psi$ lies in the tensor product $\mathbb{C}^S\cong V^{\otimes N-1}\otimes V$ where $V=\mathbb{C}^X$, and so since each basis vector $s$ corresponds to a tensor product $s=x_{i_{N-1}} \cdots x_{i_1}x_{a}$ we may write
\begin{equation}\label{eq:psiN}
\psi=\sum_{{i_{N-1}},\cdots,i_1,{a}}\psi_{i_{N-1}\cdots i_1a} x_{i_{N-1}} \cdots x_{i_1}x_{a}
\end{equation}
where the coefficients are the square root of probabilities $\psi_{i_{N-1}\cdots i_1a}=\sqrt{\pi(x_{i_{N-1}},\ldots,x_{i_1},x_a)}$. Now consider the rank 1 density operator on $V^{\otimes N-1}\otimes V$ given by the orthogonal projection onto $\psi,$
\[
\Pr_\psi=\psi\psi^*=\sum_{\substack{i_{N-1},\ldots, i_1,a \\ j_{N-1},\ldots, j_1,b}} \psi_{i_{N-1}\cdots i_1a}\overline{\psi}_{j_{N-1}\cdots j_1b} x_{i_{N-1}}\cdots x_{i_1}x_a x^*_{j_{N-1}}\cdots x^*_{j_1}x^*_b.
\]
Similarly as done in Equation \eqref{eq:rho_y}, tracing out the prefix subsystem gives the reduced density ${\rho_V=\tr_{V^{\otimes N-1}}\Pr_\psi\colon V\to~V}$, which has the following explicit description.
\begin{equation}\label{rho_suffix}
\rho_V=\sum_{\substack{i_{N-1},\ldots, i_1 }\\ a,b } \psi_{i_{N-1}\cdots i_1a}\overline{\psi}_{i_{N-1}\cdots i_1b} x_a x^*_b
\end{equation}
A slight modification of this expression gives rise to (unnormalized) reduced densities associated to phrases in $X^{N-1}$. Indeed, Equation (\ref{rho_suffix}) involves a sum over all prefixes, but suppose instead the sum is over all indices \textit{except} those associated to a given phrase. For example, if $N=5$ and $x_{i_2}x_{i_1}$ is a fixed phrase of length 2, consider the following operator where the indices $i_1$ and $i_2$ are fixed.
\[
\hat{\rho}_{x_{i_2}x_{i_1}}:=\sum_{{i_4,i_3}, a,b }\psi_{i_4 {i_3}  {i_2} {i_1}{a}} \overline{\psi}_{i_4 {i_3}  {i_2} {i_1}{b}}x^*_{a } x_{b}
\]
This new operator may not have unit trace, though it is still positive semidefinite. We therefore view it as \emph{the unnormalized reduced density associated to the phrase} $x_{i_2}x_{i_1}$. Informally, it is obtained by first composing the vector $x_{i_2}x_{i_1}$ with $\psi$ at the two sites directly adjacent to the suffix site, then forming the orthogonal projection onto this modified vector, and then tracing out the remaining prefix indices $i_3$ and $i_4$. The tensor network diagrams in Figure \ref{fig:rhohat} illustrate this relationship between $\psi$ and $\rho_V$ and $\hat\rho_{x_{i_2}x_{i_1}}$. This construction is summarized in Definition \ref{def:density} below and an example is given in Example \ref{ex:dog1}. Afterwards, we will show that renormalizing $\hat\rho_{x_{i_2}x_{i_1}}$ to a unit-trace operator  $\rho_{x_{i_2}x_{i_1}}$ will capture the conditional probability distribution on the set of suffixes of $x_{i_2}x_{i_1}$ in a principled way. 
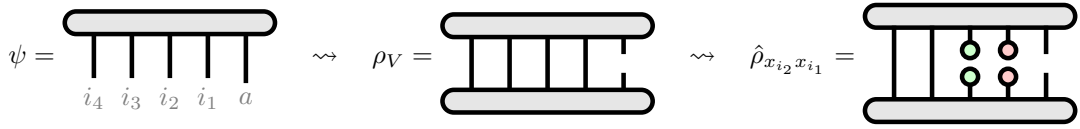
\begin{figure}[h!]
\begin{equation*}
\psi=
\begin{tikzpicture}[x=.5cm,y=1cm,baseline={(current bounding box.center)}]
    \node[hugetensor,fill=black!10, minimum width=30mm] (T1) at (0,2) {};
        \node[] (l0) at (-2,2) {};
        \node[] (l1) at (-1,2) {};
        \node[] (l2) at (0,2) {};
        \node[] (l3) at (1,2) {};
        \node[] (l4) at (2,2) {};

        \node[] (n0) at (-2,1) {{\color{gray}$i_4$}};
        \node[] (n1) at (-1,1) {{\color{gray}$i_3$}};
        \node[] (n2) at (0,1) {{\color{gray}$i_2$}};
        \node[] (n3) at (1,1) {{\color{gray}$i_1$}};
        \node[] (n4) at (2,1) {{\color{gray}$a$}};

		\draw[thick]  (l0) -- (n0);	
        \draw[thick]  (l1) -- (n1);
        \draw[thick]  (l2) -- (n2);
        \draw[thick]  (l3) -- (n3);
        \draw[thick]  (l4) -- (n4);
\end{tikzpicture}
\quad\rightsquigarrow\quad
\rho_V=
\begin{tikzpicture}[x=.5cm,y=1cm,baseline={(current bounding box.center)}]
    \node[hugetensor,fill=black!10, minimum width=30mm] (T1) at (0,2) {};
        \node[] (l0) at (-2,2) {};
        \node[] (l1) at (-1,2) {};
        \node[] (l2) at (0,2) {};
        \node[] (l3) at (1,2) {};
        \node[] (l4) at (2,2) {};

        \node[] (n0) at (-2,1) {};
        \node[] (n1) at (-1,1) {};
        \node[] (n2) at (0,1) {};
        \node[] (n3) at (1,1) {};
        \node[] (n4) at (2,1.5) {};

        \node[] (l5) at (2,1){};
        \node[] (n5) at (2,1.5){};

		\draw[thick]  (l0) -- (n0);	
        \draw[thick]  (l1) -- (n1);
        \draw[thick]  (l2) -- (n2);
        \draw[thick]  (l3) -- (n3);
        \draw[thick]  (l4) -- (n4);
        \draw[thick]  (l5) -- (n5);
    \node[hugetensor,fill=black!10, minimum width=30mm]  at (0,1) {};
\end{tikzpicture}
\quad\rightsquigarrow\quad
\hat\rho_{x_{i_2}x_{i_1}}=
\begin{tikzpicture}[x=.5cm,y=1.25cm,baseline={(current bounding box.center)}]
    \node[hugetensor,fill=black!10, minimum width=30mm] (T1) at (0,2) {};
        \node[] (l0) at (-2,2) {};
        \node[] (l1) at (-1,2) {};
        \node[] (l2) at (0,2) {};
        \node[] (l3) at (1,2) {};
        \node[] (l4) at (2,2) {};

        \node[] (n0) at (-2,1) {};
        \node[] (n1) at (-1,1) {};
        \node[] (n2) at (0,1.5) {};
        \node[] (n3) at (1,1.5) {};
        \node[] (n4) at (2,1.5) {};

        \node[] (ll2) at (0,1){};
        \node[] (nn2) at (0,1.5){};
        \node[] (ll3) at (1,1){};
        \node[] (nn3) at (1,1.5){};
        \node[] (ll4) at (2,1){};
        \node[] (nn4) at (2,1.5){};

		\draw[thick]  (l0) -- (n0);	
        \draw[thick]  (l1) -- (n1);
        \draw[thick]  (l2) -- (n2);
        \draw[thick]  (l3) -- (n3);
        \draw[thick]  (l4) -- (n4);
        \draw[thick]  (ll2) -- (nn2);
        \draw[thick]  (ll3) -- (nn3);
        \draw[thick]  (ll4) -- (nn4);

        \node[littletensor,fill=green!20,yshift=-5pt] at (nn2) {};
        \node[littletensor,fill=green!20,yshift=5pt] at (n2) {};
        \node[littletensor,fill=red!20,yshift=-5pt] at (nn3) {};
        \node[littletensor,fill=red!20,yshift=5pt] at (n3) {};

    \node[hugetensor,fill=black!10, minimum width=30mm]  at (0,1) {};
\end{tikzpicture}
\end{equation*}
\caption{A tensor network diagram illustrating the construction of the reduced densities $\rho_V$ and $\hat\rho_{x_{i_2}x_{i_1}}$ from the unit vector $\psi.$ }
\label{fig:rhohat}
\end{figure}

\begin{definition}\label{def:density}
The \emph{unnormalized reduced density} $\hat{\rho}_{x_{i_k}\cdots x_{i_1}}$  associated to a phrase $x_{i_k}\cdots x_{i_1}$ of length $k\geq 1$ is the following positive semidefinite operator on $V=\mathbb{C}^X$,
\begin{equation*}
\hat{\rho}_{x_{i_k}\cdots x_{i_1}}
:=\sum_{\substack{i_{N-1},\ldots,i_{k+1}\\a, b}} \psi_{i_{N-1} \cdots {i_1}  a} \overline{\psi}_{i_{N-1} \cdots {i_1}b}x_{a} x^*_{b}.
\end{equation*}
This operator may simply be referred to as the \emph{reduced density for}  $x_{i_k}\cdots x_{i_1}$, keeping in mind that it may not have unit trace. 
\end{definition}

\noindent As an immediate consequence, two words $x_{i_1}$ and $x_{i'_1}$ map to the same operator $\hat\rho_{x_{i_1}}=\hat\rho_{x'_{i_1}}$ if they share the same statistics in the language, that is if $\psi_{i_{N-1} \cdots {i_1}  a}=\psi_{i_{N-1} \cdots {i'_1}  a}$ for all $a$ and for all $i_{N-1},\ldots,i_2$. The same is true for more general phrases of length $k\geq 1$. Another consequence is that the reduced density for a given phrase decomposes as a sum of other reduced densities, one associated to each expression containing that phrase. Though simple, this will result reappear a number of times.

\begin{lemma}\label{lemma}
For any $1\leq k\leq N-1$ and any phrase $x_{i_k}\cdots x_{i_1}$,
\[
\hat\rho_{x_{i_k}\cdots x_{i_1}} = \sum_{i_{k+1}}\hat\rho_{x_{i_{k+1}}x_{i_k}\cdots x_{i_1}}
\]
\end{lemma}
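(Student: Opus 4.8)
The plan is to prove the identity by directly unfolding both sides through Definition \ref{def:density}, then observing that the outer summation on the right-hand side simply restores one more summation index, recovering the left-hand side verbatim. No new machinery is required; the entire content is bookkeeping on summation ranges, so I would present it as a short direct computation rather than invoking any structural property of the partial trace.

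First I would expand each summand appearing on the right. Applying Definition \ref{def:density} to the length-$(k+1)$ phrase $x_{i_{k+1}}x_{i_k}\cdots x_{i_1}$, whose fixed indices are now $i_1,\ldots,i_{k+1}$, gives
\[
\hat\rho_{x_{i_{k+1}}x_{i_k}\cdots x_{i_1}}
=\sum_{\substack{i_{N-1},\ldots,i_{k+2}\\a,b}}\psi_{i_{N-1}\cdots i_1 a}\overline{\psi}_{i_{N-1}\cdots i_1 b}\,x_a x_b^*,
\]
where the free prefix indices run over $i_{k+2},\ldots,i_{N-1}$. The summand here is literally the same expression as in the definition of $\hat\rho_{x_{i_k}\cdots x_{i_1}}$; only the partition of the prefix indices into ``fixed'' and ``summed'' differs.

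Next I would prepend the outer sum $\sum_{i_{k+1}}$ and merge it with the inner summation. Since $i_{k+1}$ enters the summand only through the coefficients $\psi_{i_{N-1}\cdots i_1 a}$ and $\overline{\psi}_{i_{N-1}\cdots i_1 b}$, and plays no role in the outer-product factor $x_a x_b^*$, the two sums combine freely into a single sum over $i_{k+1},i_{k+2},\ldots,i_{N-1}$ together with $a,b$:
\[
\sum_{i_{k+1}}\hat\rho_{x_{i_{k+1}}x_{i_k}\cdots x_{i_1}}
=\sum_{\substack{i_{N-1},\ldots,i_{k+1}\\a,b}}\psi_{i_{N-1}\cdots i_1 a}\overline{\psi}_{i_{N-1}\cdots i_1 b}\,x_a x_b^*.
\]
The right-hand side is now exactly the definition of $\hat\rho_{x_{i_k}\cdots x_{i_1}}$, with fixed indices $i_1,\ldots,i_k$ and free indices $i_{k+1},\ldots,i_{N-1},a,b$, which closes the argument.

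There is no genuine obstacle here; the only points demanding care are clerical. One must keep the right-to-left index convention straight, so that extending the phrase on the left adjoins the index $i_{k+1}$ rather than an index adjacent to the suffix, and one must confirm that the two displayed summation ranges genuinely coincide after merging. The single case worth flagging is the boundary: the recursion is meaningful precisely when the extended phrase $x_{i_{k+1}}\cdots x_{i_1}$ still fits inside a prefix of length $N-1$, i.e.\ when $k+1\le N-1$, since for $k=N-1$ there is no site $i_{k+1}$ to sum over.
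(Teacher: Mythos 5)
Your proof is correct and is essentially identical to the paper's: both sides are expanded via Definition \ref{def:density} and the sum over $i_{k+1}$ is simply split off from (or merged into) the full prefix summation; you merely read the chain of equalities from right to left where the paper reads it left to right. Your remark that the recursion only makes sense for $k+1\le N-1$ is a fair clerical point about the stated range, but it does not affect the argument.
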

\begin{proof}
This follows directly from the definition:
\begin{align*}
\hat\rho_{x_{i_k}\cdots x_{i_1}} 
&= 
\sum_{\substack{ i_{N-1},\ldots,i_{k+1}\\a,b}}  \psi_{i_{N-1} \cdots {i_1}  a} \overline{\psi}_{i_{N-1} \cdots {i_1}b}x_{a} x^*_{b}\\
&=\sum_{i_{k+1}}\left(\sum_{\substack{ i_{N-1},\ldots,i_{k+2}\\a,b}}  \psi_{i_{N-1} \cdots {i_1}  a} \overline{\psi}_{i_{N-1} \cdots {i_1}b}x_{a} x^*_{b}\right)\\[5pt]
&=\sum_{i_{k+1}}\hat\rho_{x_{i_{k+1}}x_{i_k}\cdots x_{i_1}}.
\end{align*}
\end{proof}
\noindent As a result, the ambiance vector associated to a word (defined below Proposition \ref{prop:same_image}) such as $x_{i_1}=\textit{dog}$ decomposes as a sum of ambiance vectors, one for each expression ending with \textit{dog}. The lemma also implies\footnote{In \cite[Section 3.4]{TDB}, an operator-sum decomposition of $\rho_V$ is used to write $\hat\rho_{x_{i_1}}$ in terms of the linear map $V^{\otimes N-1}\to V$ associated to $\psi\in V^{\otimes N-1}\otimes V$. It is equivalent to the explicit expression given in Definition \ref{def:density} above.} that the reduced density for any word $x_{i_1}$ decomposes as a sum of rank 1 operators---one associated to each phrase $x_{i_{N-1}}\cdots x_{i_1}$ that ends with $x_{i_1}$.
\begin{equation}\label{eq:decomp}
\hat\rho_{x_{i_1}} =\sum_{i_2}\hat\rho_{x_{i_2}x_{i_1}}=\sum_{i_3, i_2}\hat\rho_{x_{i_3}x_{i_2}x_{i_1}} =\cdots =\sum_{i_{N-1},\ldots,i_3,i_2}\hat\rho_{x_{i_{N-1}}\cdots x_{i_3}x_{i_2}x_{i_1}}
\end{equation}
To see that each operator $\hat\rho_{x_{i_{N-1}}\cdots x_{i_3}x_{i_2}x_{i_1}}$ has rank 1, notice that its expression in Definition \ref{def:density} does not involve a sum over prefixes. Equivalently, no edges are contracted in its tensor diagram representation, as illustrated in Figure \ref{fig:rank1}.
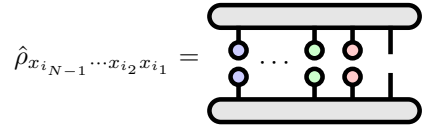
\begin{figure}[h!]
\begin{equation*}
\hat\rho_{x_{i_{N-1}}\cdots x_{i_2}x_{i_1}}=
\begin{tikzpicture}[x=.5cm,y=1.25cm,baseline={(current bounding box.center)}]
    \node[hugetensor,fill=black!10, minimum width=30mm] (T1) at (0,2) {};
        \node[] (l0) at (-2,2) {};
        \node[] (l1) at (-1,1.5) {$\cdots$};
        \node[] (l2) at (0,2) {};
        \node[] (l3) at (1,2) {};
        \node[] (l4) at (2,2) {};

        \node[] (n0) at (-2,1.5) {};
        \node[] (n2) at (0,1.5) {};
        \node[] (n3) at (1,1.5) {};
        \node[] (n4) at (2,1.5) {};

        \node[] (ll0) at (-2,1){};
        \node[] (nn0) at (-2,1.5){};
        \node[] (ll2) at (0,1){};
        \node[] (nn2) at (0,1.5){};
        \node[] (ll3) at (1,1){};
        \node[] (nn3) at (1,1.5){};
        \node[] (ll4) at (2,1){};
        \node[] (nn4) at (2,1.5){};

		\draw[thick]  (l0) -- (n0);	

        \draw[thick]  (l2) -- (n2);
        \draw[thick]  (l3) -- (n3);
        \draw[thick]  (l4) -- (n4);
        \draw[thick]  (ll0) -- (nn0);
        \draw[thick]  (ll2) -- (nn2);
        \draw[thick]  (ll3) -- (nn3);
        \draw[thick]  (ll4) -- (nn4);

 		\node[littletensor,fill=blue!20,yshift=-5pt] at (nn0) {};
        \node[littletensor,fill=blue!20,yshift=5pt] at (n0) {};
        \node[littletensor,fill=green!20,yshift=-5pt] at (nn2) {};
        \node[littletensor,fill=green!20,yshift=5pt] at (n2) {};
        \node[littletensor,fill=red!20,yshift=-5pt] at (nn3) {};
        \node[littletensor,fill=red!20,yshift=5pt] at (n3) {};

    \node[hugetensor,fill=black!10, minimum width=30mm]  at (0,1) {};
\end{tikzpicture}
\end{equation*}
\caption{A rank 1 operator illustrated as a tensor network diagram with no contracted edges.}
\label{fig:rank1}
\end{figure}
Further observe that Definition \ref{def:density} and Lemma \ref{lemma} only regard those phrases that occur adjacent to a suffix. One can also associate reduced densities to phrases that occur in \textit{any} position in a sequence (for instance, see the discussion surrounding Figure \ref{fig:bias}) and find an analogous decomposition. We omit this more general discussion to streamline the presentation. 

In the example below, we consider a toy corpus containing five phrases of length four. Each phrase will correspond to a sequence in a four-fold Cartesian product of \textit{different} sets $A\times B\times C\times D$ rather than the same set $X\times X\times X\times X$, as one might use in practice. This minor adjustment will simply keep the example tidy (for instance, it allows us to consider a $2\times 2$ matrix rather than a sparse $8\times 8$ matrix) while still illustrating the theory.

\begin{example}\label{ex:dog1}
Begin with the following ordered sets, $A=\{\text{small, big}\}$ and $B=\{\text{black, white}\}$ and $C=\{\text{dog, cat}\}$ and $D=\{\text{barks, runs}\}$, and consider the five phrases of length $N=4$ shown below on the left. Define the vector $\psi$ to be the normalized sum of these five phrases, as shown on the right.
\bigskip

\begin{minipage}{0.4\textwidth}
	\begin{description}[before={\renewcommand\makelabel[1]{\bfseries ##1}}]
		\item small black dog barks
		\item small white dog barks
		\item big black dog runs
		\item big white cat runs
		\item small black cat runs\\
	\end{description}
\end{minipage}
\begin{minipage}{0.3\textwidth}
\[\hspace{-20pt}
\psi=\frac{1}{\sqrt5}
\left(
	\begin{array}{llllllll}
	\text{small} & \otimes & \text{black} & \otimes & \text{dog} & \otimes &\text{barks} & +\\
	\text{small} & \otimes & \text{white} & \otimes & \text{dog} & \otimes &\text{barks} &  +\\
	\text{big} & \otimes & \text{black} & \otimes & \text{dog} & \otimes &\text{runs} &  +\\
	\text{big} & \otimes & \text{white} & \otimes & \text{cat} & \otimes &\text{runs} &  +\\
	\text{small} & \otimes & \text{black} & \otimes & \text{cat} & \otimes &\text{runs} &
	\end{array}
\right)
\]
\end{minipage}
\bigskip

\noindent In fact, $\psi$ is obtained from a probability distribution $\pi\colon A\times B\times C\times D\to\mathbb{R}$ as in Equation \eqref{eq:psiN}, where $\pi$ is uniformly concentrated on the subset $T\subseteq A\times B\times C\times D$ consisting of the five phrases above. Further note that $\psi$ lies in the tensor product $\mathbb{C}^A\otimes\mathbb{C}^B\otimes\mathbb{C}^C\otimes\mathbb{C}^D$, where each factor is isomorphic to $\mathbb{C}^2$. As before, elements in $A\times B\times C$ are prefixes and elements in $D$ are suffixes. Following Equation (\ref{rho_suffix}), the reduced density operator $\rho_D=\tr_{A\times B\times C}\Pr_\psi$ on the suffix subsystem is given by
\begin{align*}
\rho_D&=\sum_{(a,b,c)\in A\times B\times C}\pi(a,b,c,\text{barks})\;\text{barks}\otimes\text{barks}^* \\[5pt]
&+\sum_{(a,b,c)\in A\times B\times C}\sqrt{\pi(a,b,c,\text{barks})\pi(a,b,c,\text{runs})}\;\text{barks}\otimes\text{runs}^* \\[5pt]
&+\sum_{(a,b,c)\in A\times B\times C}\sqrt{\pi(a,b,c,\text{runs})\pi(a,b,c,\text{barks})}\;\text{runs}\otimes\text{barks}^* \\[5pt]
&+\sum_{(a,b,c)\in A\times B\times C}\pi(a,b,c,\text{runs})\;\text{runs}\otimes\text{runs}^* \\[5pt]
& = \frac{1}{5}
\begin{bmatrix}
2 & 0\\
0 & 3
\end{bmatrix}
\end{align*}
where \textit{barks} is identified with $\begin{bsmallmatrix}1\\0\end{bsmallmatrix}$ while \textit{runs} is identified with $\begin{bsmallmatrix}0\\1\end{bsmallmatrix}$. Up to the normalizing factor $1/5$, the entries of this matrix have a simple combinatorial interpretation: \textit{barks} appears twice in $T$ and \textit{runs} appears three times, and these integers are seen along the diagonal of $\rho_D$. The off-diagonals are both zero, which is the number of prefixes $(a,b,c)\in T$ that \textit{barks} and \textit{runs} have in common.  To compute the reduced density $\hat\rho_{\text{dog}}$ associated to the word \textit{dog} as in Definition \ref{def:density},  we fix $c=\textit{dog}$ and sum over pairs $(a,b)\in A\times B.$ Writing this out explicitly, 
\begin{align*}
\hat\rho_{\text{dog}}&=\sum_{(a,b)\in A\times B}\pi(a,b,\text{dog},\text{barks})\;\text{barks}\otimes\text{barks}^* \\[5pt]
&+\sum_{(a,b)\in A\times B}\sqrt{\pi(a,b,\text{dog},\text{barks})\pi(a,b,\text{dog},\text{runs})}\;\text{barks}\otimes\text{runs}^* +\\[5pt]
&+\sum_{(a,b)\in A\times B}\sqrt{\pi(a,b,\text{dog},\text{runs})\pi(a,b,\text{dog},\text{barks})}\;\text{runs}\otimes\text{barks}^* +\\[5pt]
&+\sum_{(a,b)\in A\times B}\pi(a,b,\text{dog},\text{runs})\;\text{runs}\otimes\text{runs}^* \\[5pt]
& = \frac{1}{5}
\begin{bmatrix}
2 & 0\\
0 & 1
\end{bmatrix}.
\end{align*}
Again, the entries of this matrix can be understood combinatorially. The word \textit{dog} appears three times in $T$. Of those three occurrences, it is followed by \textit{barks} twice and by \textit{runs} once, as seen along the diagonal of $\hat\rho_{\text{dog}}.$ The off-diagonals are both zero, which is the number of phrases $(a,b)$ that precede both \textit{dog barks} and \textit{dog runs}. Now suppose we fix $b=\textit{black}$ and sum over $a\in A$ alone in the calculation above. The resulting operator is the reduced density associated to the phrase \textit{black dog}. Further fixing $a=\textit{small}$ gives the reduced density for \textit{small black dog.}
\[
\hat\rho_{\text{black dog}}=
\frac{1}{5}
\begin{bmatrix}
1 & 0\\
0 & 1
\end{bmatrix}
\qquad
\hat\rho_{\text{small black dog}}=
\frac{1}{5}
\begin{bmatrix}
1&0\\
0&0
\end{bmatrix}
\]
Notably, operators associated to phrases containing the word \textit{dog} are related. By Lemma \ref{lemma} we have that $\hat\rho_{\text{dog}}=\hat\rho_{\text{black dog}} + \hat\rho_{\text{white dog}}$, where each summand further decomposes as $\hat\rho_{\text{black dog}}=\hat\rho_{\text{small black dog}}+\hat\rho_{\text{big black dog}}$ and $\hat\rho_{\text{white dog}}=\hat\rho_{\text{small white dog}}$. As a result, the reduced density for \textit{dog} can be written as a sum of reduced densities, one for every expression containing the word \textit{dog}. This pairs well with the intuition that the meaning of a word consists in all ways that word fits into expressions in the language.
\end{example}

\noindent The previous example illustrates how reduced densities obtained from a carefully chosen $\psi$ contain relevant combinatorial and statistical information about phrases in language. Additional features of these operators are given below.

\begin{proposition}\label{prop:trace}
The trace of $\hat\rho_{x_{i_k}\cdots x_{i_1}}$ for any phrase $x_{i_k}\cdots x_{i_1}$ of length $k\geq 1$ is the marginal probability of that phrase.
\end{proposition}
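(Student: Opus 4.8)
The plan is to read the trace straight off the defining formula in Definition \ref{def:density} and then recognize the resulting quantity as a marginalization of $\pi$. First I would invoke linearity of the trace together with the identity $\tr(x_a x_b^*)=\<x_b,x_a\>=\delta_{ab}$, which was already used in deriving Equation \eqref{eq:rho_y}. Applied to
\[
\hat{\rho}_{x_{i_k}\cdots x_{i_1}}
=\sum_{\substack{i_{N-1},\ldots,i_{k+1}\\a, b}} \psi_{i_{N-1} \cdots {i_1}  a} \overline{\psi}_{i_{N-1} \cdots {i_1}b}\,x_{a} x^*_{b},
\]
this collapses the double sum over $a,b$ to the diagonal terms $a=b$, leaving
\[
\tr\hat{\rho}_{x_{i_k}\cdots x_{i_1}}
=\sum_{\substack{i_{N-1},\ldots,i_{k+1}\\a}} \psi_{i_{N-1} \cdots {i_1}  a}\,\overline{\psi}_{i_{N-1} \cdots {i_1}a}.
\]

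Next I would substitute the definition of the coefficients, namely $\psi_{i_{N-1}\cdots i_1 a}=\sqrt{\pi(x_{i_{N-1}},\ldots,x_{i_1},x_a)}$. Because these are real and nonnegative, each summand is $\psi_{i_{N-1}\cdots i_1 a}\,\overline{\psi}_{i_{N-1}\cdots i_1 a}=\pi(x_{i_{N-1}},\ldots,x_{i_1},x_a)$, so that
\[
\tr\hat{\rho}_{x_{i_k}\cdots x_{i_1}}
=\sum_{\substack{i_{N-1},\ldots,i_{k+1}\\a}} \pi(x_{i_{N-1}},\ldots,x_{i_{k+1}},x_{i_k},\ldots,x_{i_1},x_a).
\]
The crucial observation is that the indices being summed—namely $i_{N-1},\ldots,i_{k+1}$ together with the suffix index $a$—are exactly every coordinate \emph{except} the $k$ positions pinned to the fixed phrase $x_{i_k}\cdots x_{i_1}$. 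By the definition of a marginal distribution, summing the joint distribution $\pi$ over all free coordinates yields precisely the marginal probability that the sites adjacent to the suffix carry the words $x_{i_k},\ldots,x_{i_1}$, which is the marginal probability of the phrase. This completes the argument.

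I do not expect a genuine obstacle here, as the computation is routine; the one point that warrants care is the bookkeeping in the final step. I would emphasize that the $\delta_{ab}$ arising from the trace is what discards the off-diagonal cross-terms between distinct suffixes, so no overcounting occurs, and that the remaining summation ranges over \emph{all} non-phrase indices, so the result is a bona fide marginal rather than a partial sum. For $k=N-1$ (a phrase of maximal length filling the entire prefix) the sum over prefix indices is empty and the formula degenerates correctly to $\sum_a \pi(x_{i_{N-1}},\ldots,x_{i_1},x_a)$, the marginal probability of that full prefix, which provides a useful consistency check.
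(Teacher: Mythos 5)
Your proof is correct and follows essentially the same route as the paper's: the paper likewise computes $\tr\hat\rho_{x_{i_k}\cdots x_{i_1}}$ as the diagonal sum $\sum_{i_{N-1},\ldots,i_{k+1},a}|\<\psi,x_{i_{N-1}}\cdots x_{i_1}x_a\>|^2=\sum\pi(x_{i_{N-1}},\ldots,x_{i_1},x_a)$ and identifies this with the marginal. You merely make the $\delta_{ab}$ collapse and the final marginalization bookkeeping more explicit, which is fine.
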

\begin{proof}
The trace of $\hat{\rho}_{x_{i_k}\cdots x_{i_1}}$ is the sum
\[
\sum_{{i_{N-1}}, \ldots, {i_{k+1}}, {a}}{|\<\psi, x_{i_{N-1}}\cdots x_{i_3}  x_{i_2} x_{i_1}x_{a}\>|}^2
=\sum_{{i_{N-1}}, \ldots, {i_{k+1}}, {a}} \pi(x_{i_{N-1}},\ldots,x_{i_3},  x_{i_2}, x_{i_1}, x_{a})
\]
which is  the marginal probability $\pi(x_{i_k},\cdots, x_{i_1})$.
\end{proof}
\noindent As illustrated in Example \ref{ex:dog1}, if $\pi$ is uniformly concentrated on some subset $T\subseteq X^{N-1}\times X$, then the marginal probability $\pi(x_{i_k},\cdots, x_{i_1})$ is precisely the number of times the phrase $x_{i_k}\cdots x_{i_1}$ appears within prefixes in $T$, divided by the total number of sequences $|T|.$ Here we omit subscripts on marginal probabilities to keep the notation clean. In any case, marginal probabilities provide a way to associate properly normalized density operators to phrases. Define the \emph{unit-trace reduced density for a phrase} $x_{i_k}\cdots x_{i_1}$ to be the associated reduced density divided by its trace.
\[\rho_{x_{i_k}\cdots x_{i_1}}:=\hat\rho_{x_{i_k}\cdots x_{i_1}}/\pi(x_{i_k}\cdots x_{i_1}).\]
This trace 1 operator has the property that the diagonal entries of its matrix representation in the basis given by $X$ are the conditional probabilities $\pi(x_{i_k}\cdots x_{i_1}x_{a}|x_{i_k}\cdots x_{i_1})$ for each suffix $x_{a}\in X.$ Indeed, the $a$th diagonal entry of $\rho_{x_{i_k}\cdots x_{i_1}}$ is $\<\rho_{x_{i_k}\cdots x_{i_1}}x_{a},x_{a}\>$ which is equal to
\[
\frac{1}{\pi(x_{i_k}\cdots x_{i_1})}\sum_{i_{N-1},\ldots,i_{k+1}}|\<\psi,x_{i_{N-1}}\cdots x_{i_1}x_{a}\>|^2 = \frac{\pi(x_{i_k}\cdots x_{i_1},x_{a})}{\pi(x_{i_k}\cdots x_{i_1})}
=\pi(x_{a}|x_{i_k}\cdots x_{i_1}).
\]
In this way, the operator $\rho_{x_{i_k}\cdots x_{i_1}}$ contains the probabilities that the phrase $x_{i_k}\cdots x_{i_1}$ will be continued by a given expression. 

\begin{example}\label{ex:dog2}
We resume Example \ref{ex:dog1}, where the reduced density $\hat\rho_{\text{dog}}$ is shown below, left. The trace of this operator is $\tr\hat\rho_{\text{dog}}=3/5=\pi(\text{dog})=\sum_{a,b,d}\pi(a,b,\text{dog},d)$, which is the number of times that \textit{dog} appears in the toy corpus $T$, divided by the size of $T$. The unit-trace reduced density operator is below, right.
\[\hat\rho_{\text{dog}}=
\frac{1}{5}
\begin{bmatrix}
2&0\\
0&1
\end{bmatrix}
\qquad
\rho_{\text{dog}}=
\frac{1}{3}
\begin{bmatrix}
2&0\\
0&1
\end{bmatrix}
\]
The diagonal of $\rho_{\text{dog}}$ is the conditional probability distribution on the set of suffixes $\{\text{barks, runs}\}$ conditioned on the word \textit{dog}. That is, $\<\rho_{\text{dog}}\text{barks},\text{barks}\>=2/3=\pi(\text{barks}\mid\text{dog})$, which is the conditional probability that a sequence $(a,b,c,d)\in T$ has $d=\textit{barks}$ given that $c=\textit{dog}$. Similarly, we have $\<\rho_{\text{dog}}\text{runs},\text{runs}\>=1/3=\pi(\text{runs}\mid\text{dog})$.
\end{example}

\noindent To motivate the next result, recall that the toy corpus in Example \ref{ex:dog1} contained two colors of dogs: \textit{black dog} and \textit{white dog}. As previously remarked, the meaning of the word \textit{dog} receives a contribution from the context in which it appears---including the words \textit{black} and \textit{white}---together with the statistics of those appearances. The next proposition anchors this intuition on firmer ground and can be seen as an enrichment of Lemma \ref{lemma}. It states that $\rho_{\text{dog}}$ decomposes as a weighted sum of $\rho_{\text{black dog}}$ and $\rho_{\text{white dog}}$, where the weights are conditional probabilities. 

\begin{proposition}\label{prop:decomp}
Let $1\leq k\leq N-1$. The unit-trace reduced density for any phrase $x_{i_k}\cdots x_{i_1}$ can be written as a weighted sum of unit-trace reduced densities---one for each phrase of length $k+1$ ending in $x_{i_k}\cdots x_{i_1}$---where the weights are conditional probabilities, 
\[
\rho_{x_{i_k}\cdots x_{i_1}} =   \sum_{i_{k+1}}\pi(x_{i_{k+1}} | x_{i_k}\cdots x_{i_1}){\rho}_{x_{i_{k+1}}x_{i_k}\cdots x_{i_1}}.
\]
\end{proposition}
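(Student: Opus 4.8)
The plan is to reduce the claim to a short bookkeeping computation, leaning on Lemma~\ref{lemma} for the additive decomposition of the unnormalized densities and on Proposition~\ref{prop:trace} to convert traces into marginal probabilities. Every operator appearing in the identity is a scalar rescaling of an unnormalized reduced density, so the whole argument amounts to tracking how those scalar factors combine under the definition of conditional probability.

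First I would start from the definition $\rho_{x_{i_k}\cdots x_{i_1}}=\hat\rho_{x_{i_k}\cdots x_{i_1}}/\pi(x_{i_k}\cdots x_{i_1})$ and insert the decomposition supplied by Lemma~\ref{lemma}, giving
\[
\rho_{x_{i_k}\cdots x_{i_1}}=\frac{1}{\pi(x_{i_k}\cdots x_{i_1})}\sum_{i_{k+1}}\hat\rho_{x_{i_{k+1}}x_{i_k}\cdots x_{i_1}}.
\]
Next I would repackage each summand as a unit-trace operator: by Proposition~\ref{prop:trace} the trace of $\hat\rho_{x_{i_{k+1}}x_{i_k}\cdots x_{i_1}}$ is the marginal probability $\pi(x_{i_{k+1}}x_{i_k}\cdots x_{i_1})$, so that $\hat\rho_{x_{i_{k+1}}x_{i_k}\cdots x_{i_1}}=\pi(x_{i_{k+1}}x_{i_k}\cdots x_{i_1})\,\rho_{x_{i_{k+1}}x_{i_k}\cdots x_{i_1}}$. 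Substituting this and recognizing the scalar ratio $\pi(x_{i_{k+1}}x_{i_k}\cdots x_{i_1})/\pi(x_{i_k}\cdots x_{i_1})$ as the conditional probability $\pi(x_{i_{k+1}}\mid x_{i_k}\cdots x_{i_1})$ delivers the stated identity.

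I do not expect a genuine obstacle here: the whole derivation is a single chain of substitutions once Lemma~\ref{lemma} and Proposition~\ref{prop:trace} are available, and it is really just the operator-level shadow of the chain rule for probabilities. The only point meriting care is the tacit hypothesis that the phrase occurs, so that $\pi(x_{i_k}\cdots x_{i_1})>0$ and the normalizations are defined. As a consistency check, marginalizing over $i_{k+1}$ shows $\sum_{i_{k+1}}\pi(x_{i_{k+1}}\mid x_{i_k}\cdots x_{i_1})=1$, so the right-hand side is a genuine convex combination of unit-trace densities and is therefore itself unit-trace, matching the left-hand side.
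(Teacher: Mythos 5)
Your proposal is correct and follows essentially the same route as the paper: both insert the decomposition of Lemma~\ref{lemma} into the definition $\rho_{x_{i_k}\cdots x_{i_1}}=\hat\rho_{x_{i_k}\cdots x_{i_1}}/\pi(x_{i_k}\cdots x_{i_1})$, rescale each summand by its marginal probability, and identify the resulting ratio as the conditional probability. Your explicit appeal to Proposition~\ref{prop:trace} and the remark that $\pi(x_{i_k}\cdots x_{i_1})>0$ is tacitly assumed are harmless refinements of the same argument.
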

\begin{proof}
By Lemma \ref{lemma} we have $\hat\rho_{x_{i_k}\cdots x_{i_1}} =\sum_{i_{k+1}}\hat{\rho}_{x_{i_{k+1}}x_{i_k}\cdots x_{i_1}}$ and so 
\begin{align}
\rho_{x_{i_k}\cdots x_{i_1}}
&\notag=\frac{\hat\rho_{x_{i_k}\cdots x_{i_1}}}{\pi(x_{i_k}\cdots x_{i_1})}
= \sum_{i_{k+1}}\frac{\pi(x_{i_{k+1}}x_{i_k}\cdots x_{i_1})}{\pi(x_{i_k}\cdots x_{i_1})}\frac{\hat\rho_{x_{i_{k+1}}x_{i_k}\cdots x_{i_1}}}{\pi({x_{i_{k+1}}x_{i_k}\cdots x_{i_1}})}\\
&\notag=\sum_{i_{k+1}}\pi(x_{i_{k+1}}|x_{i_k}\cdots x_{i_1})\rho_{x_{i_{k+1}}x_{i_k}\cdots x_{i_1}}.
\end{align}
\end{proof}
\noindent So Proposition \ref{prop:decomp} relates the unit-trace reduced density $\rho_{x_{i_k}\cdots x_{i_1}}$ to all phrases of length $k+1$ that end with the given phrase $x_{i_k}\cdots x_{i_1}$. The proof of the following corollary gives the analogous statement for phrases of length $N-1$.

\begin{corollary}\label{corollary}
The unit-trace reduced density for a word $x_{i_1}$ decomposes as a weighted sum of rank 1 unit-trace reduced densities---one for each phrase of length $N-1$ that ends in $x_{i_1}$---where the weights are conditional probabilities, 
\begin{equation}\label{eq:rhoxi1}
\rho_{x_{i_1}} = \sum_{{i_{N-1}},\ldots,{i_2}}\pi(x_{i_{N-1}}\cdots x_{i_2} | x_{i_1})\rho_{x_{i_N-1}\cdots x_{i_2}x_{i_1}}.
\end{equation}
\end{corollary}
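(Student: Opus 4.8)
The plan is to prove this by the same telescoping device used for Proposition \ref{prop:decomp}, but carried all the way from a single word out to full-length prefixes, so that the corollary is really the ``length $N-1$'' instance of that proposition. The cleanest route is a direct computation starting from the definition $\rho_{x_{i_1}} = \hat\rho_{x_{i_1}}/\pi(x_{i_1})$ together with the full decomposition of the numerator already recorded in Equation \eqref{eq:decomp}, namely $\hat\rho_{x_{i_1}} = \sum_{i_{N-1},\ldots,i_2}\hat\rho_{x_{i_{N-1}}\cdots x_{i_2}x_{i_1}}$.

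First I would substitute that decomposition into $\rho_{x_{i_1}} = \hat\rho_{x_{i_1}}/\pi(x_{i_1})$, producing a sum over all prefixes of length $N-1$ ending in $x_{i_1}$ of the unnormalized operators $\hat\rho_{x_{i_{N-1}}\cdots x_{i_2}x_{i_1}}$, each divided by $\pi(x_{i_1})$. Next, for each such full-length phrase I would convert the unnormalized operator to its unit-trace version by inserting its trace: by Proposition \ref{prop:trace} the trace of $\hat\rho_{x_{i_{N-1}}\cdots x_{i_2}x_{i_1}}$ is the marginal $\pi(x_{i_{N-1}}\cdots x_{i_2}x_{i_1})$, so by the definition of the unit-trace reduced density we have $\hat\rho_{x_{i_{N-1}}\cdots x_{i_2}x_{i_1}} = \pi(x_{i_{N-1}}\cdots x_{i_2}x_{i_1})\,\rho_{x_{i_{N-1}}\cdots x_{i_2}x_{i_1}}$. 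The final step is to recognize the resulting scalar coefficient $\pi(x_{i_{N-1}}\cdots x_{i_2}x_{i_1})/\pi(x_{i_1})$ as the conditional probability $\pi(x_{i_{N-1}}\cdots x_{i_2}\mid x_{i_1})$, which yields Equation \eqref{eq:rhoxi1} verbatim.

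It remains to justify the \emph{rank 1} claim. For this I would invoke the observation accompanying Figure \ref{fig:rank1}: when the phrase has full prefix length $N-1$, the expression in Definition \ref{def:density} carries no summation over prefix indices, so $\hat\rho_{x_{i_{N-1}}\cdots x_{i_2}x_{i_1}}$ contracts no edges and is manifestly a rank $1$ operator of the form $u u^*$; dividing by the positive scalar $\pi(x_{i_{N-1}}\cdots x_{i_2}x_{i_1})$ preserves this, so each $\rho_{x_{i_{N-1}}\cdots x_{i_2}x_{i_1}}$ is rank $1$ as asserted.

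I do not expect any genuine obstacle here; the only point requiring care is the bookkeeping of probabilities---recognizing the marginal ratio as a conditional probability, and, in the alternative proof by induction that iterates Proposition \ref{prop:decomp} one factor at a time, verifying that the one-step weights telescope via the chain rule $\pi(x_{i_k}\cdots x_{i_2}\mid x_{i_1})\,\pi(x_{i_{k+1}}\mid x_{i_k}\cdots x_{i_2}x_{i_1}) = \pi(x_{i_{k+1}}x_{i_k}\cdots x_{i_2}\mid x_{i_1})$. Both routes reduce to the definition of conditional probability, so the direct computation above is the most economical.
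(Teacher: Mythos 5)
Your proposal is correct and is essentially the paper's argument: the paper proves Equation \eqref{eq:rhoxi1} by iterating Proposition \ref{prop:decomp} down to full prefix length (the ``alternative'' route you mention), and it justifies the rank 1 claim exactly as you do, by noting that the expression in Definition \ref{def:density} for a length-$(N-1)$ phrase involves no sum over prefix indices. Your primary route---substituting Equation \eqref{eq:decomp} into $\hat\rho_{x_{i_1}}/\pi(x_{i_1})$ and normalizing each summand via Proposition \ref{prop:trace}---is just the one-shot version of the same telescoping computation and is equally valid.
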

\begin{proof}
For any phrase $x_{i_k}\cdots x_{i_1}$ of any length $k\geq 1$, a repeated application of Proposition \ref{prop:decomp} shows that the unit-trace reduced density of the phrase has the following decomposition.
\begin{align*}
\rho_{x_{i_k}\cdots x_{i_1}}
&=\sum_{i_{k+1}}\pi(x_{i_{k+1}} | x_{i_k}\cdots x_{i_1}){\rho}_{x_{i_{k+1}}x_{i_k}\cdots x_{i_1}}\\
&=\sum_{i_{k+2},i_{k+2}}\pi(x_{i_{k+2}}x_{i_{k+1}}\mid x_{i_k}\cdots x_{i_1})\rho_{x_{i_{k+2}}x_{i_{k+1}}x_{i_k}\cdots x_{i_1}}\\
&=\quad\vdots\\
&=\sum_{i_{N-1},\ldots,i_{k+1}}\pi(x_{i_{N-1}}\cdots x_{i_{k+1}}\mid x_{i_k}\cdots x_{i_1})\rho_{x_{i_{N-1}}\cdots x_{i_k}\cdots x_{i_1}}
\end{align*}
In particular, the unit-trace reduced density associated to a given word $x_{i_1}$ can be written as the following sum, which can be seen as an enrichment of Equation \eqref{eq:decomp}.
\[
\rho_{x_{i_1}} = \sum_{{i_{N-1}},\ldots,{i_2}}\pi(x_{i_{N-1}}\cdots x_{i_2} | x_{i_1})\rho_{x_{i_N-1}\cdots x_{i_2}x_{i_1}}
\]
Recall that each $\rho_{x_{i_{N-1}}\cdots x_{i_3}x_{i_2}x_{i_1}}$ is a scalar multiple of $\hat\rho_{x_{i_{N-1}}\cdots x_{i_3}x_{i_2}x_{i_1}}$, and the latter has rank 1 as its explicit expression in terms of Definition \ref{def:density} does not involve a sum over prefixes.
\end{proof}

\noindent Proposition \ref{prop:decomp} and Corollary \ref{corollary} model the idea that the meaning of a phrase is contained in the totality of expressions that contain it, together with the statistics of those occurrences. Notably, the decomposition in Corollary \ref{corollary} is not unlike a ``probabilistic spectral decomposition.'' Indeed every self-adjoint operator, including $\rho_{x_{i_1}}$, can be written as a weighted sum of projection operators corresponding to eigenvectors. The previous corollary gives an analogous decomposition where the projections correspond not to eigenvectors but rather to vectors associated to phrases that end with the word $x_{i_1}$. Likewise, the weights are not eigenvalues, but are rather conditional probabilities of phrases given that their last word is $x_{i_1}$. Alternatively, the decomposition in Equation \eqref{eq:rhoxi1} is reminiscent of a generalized measurement in quantum mechanics---a collection of positive semidefinite operators whose sum is the identity operator. Though rather than a partition of unity, we have a partition of the observable $\rho_{x_{i_1}}$.

\begin{example}\label{ex:dog3}
Let us again resume the discussion from Example \ref{ex:dog1}, where the unit-trace reduced densities associated to \textit{black dog} and \textit{white dog} are found to be
\[
\rho_{\text{black dog}}=
\frac{1}{2}
\begin{bmatrix}
1&0\\ 0&1
\end{bmatrix}
\qquad
\rho_{\text{white dog}}=
\begin{bmatrix}
1&0\\ 0&0
\end{bmatrix}.
\]
Recalling the toy corpus of that example, the word \textit{dog} appears three times. Of those three occurrences, it is preceded by \textit{white} once and by \textit{black} twice. Conditional probabilities are therefore $\pi(\text{white}\mid\text{dog})=1/3$ and $\pi(\text{black}\mid\text{dog})=2/3$, and indeed $\rho_{\text{dog}}$ has the following decomposition,
\[
\frac{1}{3}
\begin{bmatrix}
2&0\\0&1
\end{bmatrix}=
\rho_{\text{dog}}=
\pi(\text{black}\mid\text{dog})\rho_{\text{black dog}} + 
\pi(\text{white}\mid\text{dog})\rho_{\text{white dog}}
\]
where the first equality was verified in Example \ref{ex:dog2}. Compare this decomposition for $\rho_\text{dog}$ with the unnormalized analogue $\hat\rho_{\text{dog}}=\hat\rho_{\text{white dog}} + \hat\rho_{\text{black dog}}$ derived in Example \ref{ex:dog1}. A computation similar to that in Example \ref{ex:dog3} shows that $\rho_\text{dog}$ further decomposes into a sum of rank 1 operators,
\begin{align*}\rho_{\text{dog}}
&= \pi(\text{small black}|\text{dog})\rho_{\text{small black dog}}\\[5pt]
&+ \pi(\text{small white}|\text{dog})\rho_{\text{small white dog}}\\[5pt]
&+ \pi(\text{big black}|\text{dog})\rho_{\text{big black dog}} \\[5pt]
&=\frac{1}{3}\begin{bmatrix}1&0\\0&0\end{bmatrix} + \frac{1}{3}\begin{bmatrix}1&0\\0&0\end{bmatrix} + \frac{1}{3}\begin{bmatrix}0&0\\1&0\end{bmatrix}.
\end{align*}
Compare this with the unnormalized analogue $\hat\rho_{\text{dog}}=\hat\rho_{\text{small black dog}}+\hat\rho_{\text{big black dog}}+\hat\rho_{\text{small white dog}}$ derived in Example \ref{ex:dog1}.
\end{example}

As we'll see in Section \ref{sec:loewner}, the passage from $x_{i_{k}}\cdots x_{i_1}$ to $\hat\rho_{x_{i_{k}}\cdots x_{i_1}}$ together with the decompositions in Lemma \ref{lemma} and Proposition \ref{prop:decomp} pave the way for modeling a simple concept hierarchy in language. But first, we close this discussion with the observation that our reduced densities have an obvious left-right bias, which may be undesirable. One way to avoid this bias is simply to leave the $i_{N-1}$ and $a$th indices open, which gives an operator on $V\otimes V$ rather than on $V$ alone. Figure \ref{fig:bias} illustrates such an operator corresponding to a given phase $x_{i_3}x_{i_2}x_{i_1}$ of length three. But whether or not the $N-1$st sites are kept open, let us make a final remark. Suppose for the moment that $x_{j_2}x_{j_1}$ is a phrase of length two, and suppose a corpus of text is given where both $x_{i_3}x_{i_2}x_{i_1}$ and $x_{j_2}x_{j_1}$ appear simultaneously in a longer expression. Then the decompositions of the densities $\hat\rho_{x_{i_3}x_{i_2}x_{i_1}}$ and $\hat\rho_{x_{j_2}x_{j_1}}$ in the sense of Equation \eqref{eq:decomp} will contain a common summand. For example, if a corpus of text contains the expression \textit{I walked my tiny toy poodle at the new park}, then one will find that both $\hat\rho_{\text{tiny toy poodle}}$ and $\hat\rho_{\text{new park}}$ can be written as a sum of operators, both of which will include $\hat\rho_{\text{I walked my tiny toy poodle at the new park}}$ as a summand. This realizes the intuition that if two different phrases are both included in a larger expression, then there is a relationship between their meanings. 
\begin{figure}[h!]
\begin{equation*}
\hat\rho_{x_{i_3}x_{i_2}x_{i_1}}=
\begin{tikzpicture}[x=.5cm,y=1.25cm,baseline={(current bounding box.center)}]
    \node[hugetensor,fill=black!10, minimum width=50mm] (T1) at (0,2) {};
        \node[] (k2) at (-4,2) {};
        \node[] (k1) at (-3,2) {};
        \node[] (l0) at (-2,2) {};
        \node[] (l1) at (-1,2) {};
        \node[] (l2) at (0,2) {};
        \node[] (l3) at (1,2) {};
        \node[] (l4) at (2,2) {};
        \node[] (l5) at (3,2) {};
        \node[] (l6) at (4,2) {};

        \node[] (kk2) at (-4,1){};
        \node[] (ll1) at (-1,1){};
        \node[] (ll2) at (0,1){};
        \node[] (ll3) at (1,1){};
        \node[] (ll6) at (4,1){};

        \node[] (m2) at (-4,1.5) {};
        \node[] (m1) at (-3,1) {};
        \node[] (n0) at (-2,1) {};
        \node[] (n1) at (-1,1.5) {};
        \node[] (n2) at (0,1.5) {};
        \node[] (n3) at (1,1.5) {};
        \node[] (n4) at (2,1) {};
        \node[] (n5) at (3,1) {};
        \node[] (n6) at (4,1.5) {};

        \node[] (mm2) at (-4,1.5){};
        \node[] (nn1) at (-1,1.5){};
        \node[] (nn2) at (0,1.5){};
        \node[] (nn3) at (1,1.5){};
        \node[] (nn6) at (4,1.5){};

        \draw[thick]  (k2) -- (m2);
        \draw[thick]  (k1) -- (m1);
		\draw[thick]  (l0) -- (n0);	
        \draw[thick]  (l1) -- (n1);
        \draw[thick]  (l2) -- (n2);
        \draw[thick]  (l3) -- (n3);
        \draw[thick]  (l4) -- (n4);
        \draw[thick]  (l5) -- (n5);
        \draw[thick]  (l6) -- (n6);

        \draw[thick]  (kk2) -- (mm2);
        \draw[thick]  (ll1) -- (nn1);
        \draw[thick]  (ll2) -- (nn2);
        \draw[thick]  (ll3) -- (nn3);
        \draw[thick]  (ll6) -- (nn6);

        \node[littletensor,fill=green!20,yshift=-5pt] at (nn2) {};
        \node[littletensor,fill=green!20,yshift=5pt] at (n2) {};
        \node[littletensor,fill=red!20,yshift=-5pt] at (nn3) {};
        \node[littletensor,fill=red!20,yshift=5pt] at (n3) {};
        \node[littletensor,fill=blue!20,yshift=-5pt] at (nn1) {};
        \node[littletensor,fill=blue!20,yshift=5pt] at (n1) {};

    \node[hugetensor,fill=black!10, minimum width=50mm]  at (0,1) {};
\end{tikzpicture}
\qquad\qquad
\hat\rho_{x_{j_2}x_{j_1}}=
\begin{tikzpicture}[x=.5cm,y=1.25cm,baseline={(current bounding box.center)}]
    \node[hugetensor,fill=black!10, minimum width=50mm] (T1) at (0,2) {};
        \node[] (k2) at (-4,2) {};
        \node[] (k1) at (-3,2) {};
        \node[] (l0) at (-2,2) {};
        \node[] (l1) at (-1,2) {};
        \node[] (l2) at (0,2) {};
        \node[] (l3) at (1,2) {};
        \node[] (l4) at (2,2) {};
        \node[] (l5) at (3,2) {};
        \node[] (l6) at (4,2) {};

        \node[] (kk2) at (-4,1){};
        \node[] (ll4) at (2,1){};
        \node[] (ll5) at (3,1){};
        \node[] (ll6) at (4,1){};

        \node[] (m2) at (-4,1.5) {};
        \node[] (m1) at (-3,1) {};
        \node[] (n0) at (-2,1) {};
        \node[] (n1) at (-1,1) {};
        \node[] (n2) at (0,1) {};
        \node[] (n3) at (1,1) {};
        \node[] (n4) at (2,1.5) {};
        \node[] (n5) at (3,1.5) {};
        \node[] (n6) at (4,1.5) {};

        \node[] (mm2) at (-4,1.5){};
        \node[] (nn4) at (2,1.5){};
        \node[] (nn5) at (3,1.5){};
        \node[] (nn6) at (4,1.5){};

        \draw[thick]  (k2) -- (m2);
        \draw[thick]  (k1) -- (m1);
		\draw[thick]  (l0) -- (n0);	
        \draw[thick]  (l1) -- (n1);
        \draw[thick]  (l2) -- (n2);
        \draw[thick]  (l3) -- (n3);
        \draw[thick]  (l4) -- (n4);
        \draw[thick]  (l5) -- (n5);
        \draw[thick]  (l6) -- (n6);

        \draw[thick]  (kk2) -- (mm2);
        \draw[thick]  (ll4) -- (nn4);
        \draw[thick]  (ll5) -- (nn5);
        \draw[thick]  (ll6) -- (nn6);
       
        \node[littletensor,fill=yellow!20,yshift=-5pt] at (nn4) {};
        \node[littletensor,fill=yellow!20,yshift=5pt] at (n4) {};
        \node[littletensor,fill=black!30,yshift=-5pt] at (nn5) {};
        \node[littletensor,fill=black!30,yshift=5pt] at (n5) {};

    \node[hugetensor,fill=black!10, minimum width=50mm]  at (0,1) {};
\end{tikzpicture}
\end{equation*}
\caption{Associating reduced densities to phrases while leaving the first and last indices open.}
\label{fig:bias}
\end{figure}
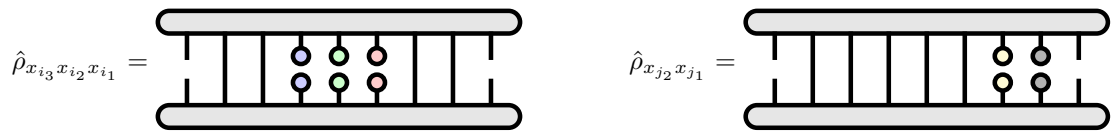

\section{Preserving a Preorder Structure}\label{sec:loewner}
In this section we  define a preorder on sequences and show that it is preserved under the assignment of densities to words described in the previous section. We also show it may be interpreted as a toy example of entailment. Let us begin with a definition. Given positive semidefinite operators $\rho$ and $\rho'$ on a fixed Hilbert space, write $\rho\geq \rho'$ if $\rho-\rho'$ is positive semidefinite. This defines a partial order on the set of such operators called the \emph{Loewner order}. In the context of language, observe that for any phrase $x_{i_k}\cdots x_{i_1}$ and for any word $x_{i_{k+1}}$, Lemma \ref{lemma} implies 
\begin{equation}\label{eq:order1}
\hat\rho_{x_{i_k}\cdots x_{i_1}}\geq \hat\rho_{x_{i_{k+1}}x_{i_k}\cdots x_{i_1}}
\end{equation}
since the difference of these operators is a sum of positive semidefinite operators. For instance, in Example \ref{ex:dog1} it was shown that $\hat\rho_\text{dog}=\hat\rho_\text{black dog}+\hat\rho_\text{white dog}$ from which it follows that $\hat\rho_\text{dog}\geq \hat\rho_\text{black dog}$ and $\hat\rho_\text{dog}\geq \hat\rho_\text{white dog}.$ Taking this a step further, we have
\begin{equation}\label{eq:compare_dogs}
\hat\rho_\text{dog}\geq \hat\rho_\text{black dog}\geq \hat\rho_\text{small black dog}.
\end{equation}
The Loewner order thus models the notion that \textit{dog} is a more general concept than \textit{black dog}, which is more general than \textit{small black dog}. Just as essential, though, are the likelihood or entailment strengths of these implications. Proposition  \ref{prop:decomp} naturally suggests conditional probabilities as a candidate for such a measurement.  Indeed, the proposition promotes Inequality \eqref{eq:order1} to the following ``enriched'' version,
\begin{equation}\label{eq:order2}
\rho_{x_{i_k}\cdots x_{i_1}}\geq \pi(x_{i_{k+1}}\mid x_{i_k}\cdots x_{i_1})\rho_{x_{i_{k+1}}x_{i_k}\cdots x_{i_1}}.
\end{equation}
From Example \ref{ex:dog3}, for instance, we find that 
\[\rho_\text{dog}\geq \pi(\text{black}\mid\text{dog})\rho_\text{black dog}\geq \pi(\text{small black}\mid\text{dog})\rho_\text{small black dog}.\]

\noindent We have therefore defined a mapping from phrases to reduced densities with the property that if a phrase $s'$ contains a phrase $s$ of smaller length, then the corresponding operators satisfy $\rho_{s}\geq \pi(s'~|~s) \rho_{s'}$, or in the unnormalized case, $\hat\rho_{s}\geq \hat\rho_{s'}$. 
In short, the assignment $s\mapsto\hat\rho_s$ preserves a certain hierarchy that exists in the domain, namely subsequence containment. Said differently, expressions in a language form a preordered set (that is, a set equipped with a relation $\leq$ that is reflexive and transitive), and the assignment $s\mapsto\hat\rho_s$ respects the preorder.

\subsection{Language as a Preordered Set}
The opening remarks of Section \ref{sec:intro} shared the perspective that the meaning of a word or phrase is determined by its environment---namely, the network of ways it is contained within expressions in a language together with the statistics of those occurrences. Putting this Yoneda-lemma-like approach together with the sequential nature of language, we model the inclusion of phrases by subsequence containment. Explicitly, let $X$ be a finite set of words, thought of as the atomic vocabulary of a language. For $N\geq 1$, let $L$ denote the subset of $\sqcup_{k=1}^{N-1}X^k\times X$ consisting of sequences $s=(x_{i_k},\ldots,x_{i_1},x_a)$ of all lengths $k\leq N-1$ such that the concatenation $x_{i_k}\cdots x_{i_1}x_a$ is a meaningful expression in the language. We will refer to elements of $L$ as \textit{phrases}, as usual writing $x_{i_k}\cdots x_{i_1}x_a$ in lieu of $(x_{i_k},\ldots,x_{i_1},x_a)$. If $X$ is the set of English words, for example, then some phrases in $L$ include \textit{dog, black dog, tall mountain}, and \textit{iced tea on a hot summer day.} To compare phrases $s,s'\in L$, write $s\leq s'$ if $s'$ contains $s$ as a subsequence. As a simple example, we write $\textit{dog}\leq\textit{small black dog}.$ Next, observe that for any $s\in L$ one has $s\leq s$. Moreover, for any $s,s',s''\in L$ if $s\leq s'$ and $s'\leq s''$, then $s''$ contains $s'$---and hence $s$---as a subsequence, and so $s\leq s''.$ This proves the following proposition. 
\begin{proposition}
The set $L$ equipped with the relation $\leq$ is a preordered set.
\end{proposition}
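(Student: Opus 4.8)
The plan is to verify directly that the relation $\leq$ satisfies the two defining axioms of a preorder, namely reflexivity and transitivity. Since a preorder is by definition a reflexive and transitive relation, nothing further (such as antisymmetry) must be checked, so the whole argument reduces to two short verifications carried out at the level of the combinatorics of finite sequences in $L$.

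First I would establish reflexivity. Here one only needs to observe that every phrase $s = x_{i_k}\cdots x_{i_1}x_a$ contains itself as a consecutive subsequence in the trivial way, by taking the full block of its own symbols. Hence $s \leq s$ holds for every $s \in L$, and this step is immediate from the definition of $\leq$.

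The substantive (though still elementary) step is transitivity. The plan is to unwind the definition: assuming $s \leq s'$ and $s' \leq s''$, the symbols of $s$ occupy a contiguous run of positions inside $s'$, and the symbols of $s'$ in turn occupy a contiguous run of positions inside $s''$. I would then compose these two inclusions, noting that the image of a contiguous block under an inclusion as a contiguous block is again a contiguous block; concretely, the starting position of $s$ within $s''$ is obtained by adding the offset of $s$ within $s'$ to the offset of $s'$ within $s''$. This exhibits $s$ as a consecutive subsequence of $s''$, giving $s \leq s''$. One should take care to work with the \emph{consecutive-symbol} notion of subsequence throughout, as specified in the definition of $L$, since it is precisely contiguity that must be seen to survive the composition.

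The main obstacle is bookkeeping rather than mathematics: one must track positions and offsets carefully enough to confirm that contiguity (not merely order-preserving occurrence) is inherited by the composite inclusion, since a non-contiguous reading of ``subsequence'' would need no such care but would define a different relation. As this is the only genuinely load-bearing point and it is combinatorially transparent, I expect the completed proof to be only a couple of lines once the two axioms have been named.
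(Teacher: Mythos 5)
Your proof is correct and follows essentially the same route as the paper's: reflexivity is immediate from self-containment, and transitivity follows by composing the two containments, with your extra care about contiguity of the consecutive-subsequence relation being a harmless elaboration of the paper's one-line argument. No gaps.
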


\noindent 
Some examples of comparable phrases in English include the following.
\begin{align*}
\text{I climbed} &\leq \text{I climbed the tall mountain} \\
\text{a hot summer} &\leq \text{iced tea on a hot summer day}\\
\text{dog}&\leq\text{black dog}\leq \text{small black dog}
\end{align*}
The similarity between $\textit{dog}\leq \textit{black dog} \leq \textit{small black dog}$ and the nearly identical string of inequalities in \eqref{eq:compare_dogs} reveals an unmistakable correspondence between our preorder on language $L$ and the Loewner order, which is a preorder on positive semidefinite operators. This correspondence is stated precisely in Proposition \ref{prop:poset_map} below. Let us recall the setup first. Sequences in $X^N\cong X^{N-1}\times X$ are considered as prefix-suffix pairs; the initial ingredient is a probability distribution $\pi\colon X^{N-1}\times X\to\mathbb{R}$, which defines the unit vector $\psi\in V^{\otimes N-1}\otimes V$ in Equation \eqref{eq:psiN} where $V=\mathbb{C}^X$; the vector gives rise to reduced densities of the form $\hat\rho_s$ or $\rho_s$, which correspond to (sub)sequences $s\in X^{N-1}$; and these reduced densities operate on the Hilbert space $V$ generated by suffixes. Importantly, the assignment $s\mapsto\hat\rho_s$ concerns only those phrases in $L$ of the form $s=x_{i_k}\cdots x_{i_1}$ for some $1\leq k\leq N-1$ that appear adjacent to a suffix, as indicated below:
\[(x_{i_{N-1}},\ldots,\overbrace{x_{i_k},\ldots,x_{i_1}}^{s},x_a).\]
In what follows, we use the calligraphic font $\mathcal{L}\subseteq L$ to denote the subset of all such $s.$

\begin{proposition}\label{prop:poset_map}
Let $(\text{Pos}(V),\leq)$ denote the set of positive semidefinite operators on $V=\mathbb{C}^X$ equipped with the Loewner order, and let $\psi\in V^{\otimes N-1}\otimes V$ be the unit vector in Equation \eqref{eq:psiN}. The function $(\mathcal{L},\leq)\to (\text{Pos}(V),\leq)$ defined by $s\mapsto\hat\rho_s$ described in Definition \ref{def:density} is order-reversing. That is, $\hat\rho_s\geq \hat\rho_{s'}$ whenever $s\leq s'$.
\end{proposition}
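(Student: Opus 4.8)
The plan is to reduce the statement to the single-word decomposition of Lemma~\ref{lemma} together with the fact that each $\hat\rho$ is positive semidefinite. The first and most delicate step is to unwind what $s\leq s'$ means for two phrases of $\mathcal{L}$. Every element of $\mathcal{L}$ is \emph{right-anchored}: it is a phrase $x_{i_k}\cdots x_{i_1}$ whose final word $x_{i_1}$ sits immediately before the suffix site. Since $\leq$ is containment of a contiguous block, and since both $s$ and $s'$ terminate at this same anchoring site, I would argue that $s\leq s'$ forces $s$ to be the rightmost block of $s'$; that is, if $s=x_{i_k}\cdots x_{i_1}$ then necessarily
\[
s'=x_{i_{k'}}\cdots x_{i_{k+1}}x_{i_k}\cdots x_{i_1}
\]
for some $k'\geq k$, so that $s'$ is obtained from $s$ by prepending words on the left. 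I expect this order-theoretic bookkeeping---ruling out that the block $s$ could sit in the interior or at the left of $s'$ once the common right-anchoring is taken into account---to be the main conceptual point; the remaining steps are formal.

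Granting this characterization, the operator identity is read directly off Definition~\ref{def:density}. The operator $\hat\rho_s$ fixes the indices $i_k,\ldots,i_1$ and sums over $i_{N-1},\ldots,i_{k+1}$, whereas $\hat\rho_{s'}$ additionally holds $i_{k+1},\ldots,i_{k'}$ fixed. Iterating Lemma~\ref{lemma} therefore gives
\[
\hat\rho_s=\sum_{i_{k+1},\ldots,i_{k'}}\hat\rho_{x_{i_{k'}}\cdots x_{i_{k+1}}x_{i_k}\cdots x_{i_1}},
\]
and $\hat\rho_{s'}$ appears as exactly one summand, namely the one in which $i_{k+1},\ldots,i_{k'}$ take the values occurring in $s'$. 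Consequently $\hat\rho_s-\hat\rho_{s'}$ equals the sum of the remaining terms, each of which is positive semidefinite by Definition~\ref{def:density}. A sum of positive semidefinite operators is positive semidefinite, so $\hat\rho_s-\hat\rho_{s'}\geq 0$, i.e.\ $\hat\rho_s\geq\hat\rho_{s'}$ in the Loewner order, which is the asserted order-reversal.

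An equivalent route avoids the iterated decomposition: prepending a single word gives $\hat\rho_t\geq\hat\rho_{x_j t}$ at once from Inequality~\eqref{eq:order1}, and since the Loewner order is a genuine (hence transitive) partial order, chaining one such inequality for each word prepended to pass from $s$ to $s'$ again yields $\hat\rho_s\geq\hat\rho_{s'}$. Either way the proposition rests only on Lemma~\ref{lemma} and positivity, so that the sole real content is the identification of $\leq$ on $\mathcal{L}$ with left-extension carried out in the first step.
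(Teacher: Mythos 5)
Your proposal is correct and follows essentially the same route as the paper: the paper's proof simply writes $s'=x_{i_m}\cdots x_{i_{k+1}}s$ and invokes Lemma~\ref{lemma}, exactly the left-extension characterization plus iterated decomposition that you spell out. The only difference is that you make explicit the bookkeeping (that containment of right-anchored phrases in $\mathcal{L}$ means prepending on the left, and that the difference of operators is a sum of positive semidefinite summands), which the paper leaves implicit.
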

\begin{proof}
Suppose $s\leq s'$ so that $s=x_{i_k}\cdots x_{i_1}$ and $s'=x_{i_m}\cdots x_{i_{k+1}}s$ for some $1\leq k\leq m\leq N-1$. Lemma \ref{lemma} implies that $\hat\rho_s\geq \hat\rho_{s'}$. 
\end{proof}

\noindent Observe that properties of the mapping $s\mapsto\hat\rho_s$ depend on the probability distribution $\pi$ used to define the vector $\psi$. In the toy scenario of Example \ref{ex:dog1}, for instance, the mapping is not ``full'' in the category theoretical sense since one finds that $\hat\rho_{\text{black cat}}\leq \hat\rho_{\text{dog}}$ under the Loewner order, as $\hat\rho_{\text{black cat}}=\tfrac{1}{5}\begin{bsmallmatrix}0&0\\0&1\end{bsmallmatrix}$, and yet $\textit{dog}\not\leq\textit{black cat}$ in  $\mathcal{L}$. This may not be the case, however, for different choices of $\pi.$ On the other hand, $\hat\rho_{\text{black cat}}\leq \hat\rho_{\text{dog}}$  \textit{does} imply a relationship between the sets of possible suffixes for these two phrases---see the discussion in Section \ref{ssec:conclusion}. Either way, Proposition \ref{prop:poset_map} shows that our preorder on language models a simple form of hierarchy which is preserved under the passage to linear algebra described in Section \ref{sec:text_to_density}. Thinking back to the discussion of meaning, it is now simple to identify the context, or environment, of a word. It is simply upper closure. For instance, the set of all expressions that contain \textit{dog} is given by\footnote{In the language of category theory, the upper closure of \textit{dog} is the image of \textit{dog} under the Yoneda embedding $L^{\text{op}}~\to~UL$, where $UL$ denotes all upward closed subsets of $L$ ordered by inclusion, and where the preorders $L$ and $UL$ are viewed as categories enriched over truth values.} $\uparrow\{\text{dog}\}:=\{s\in L\mid \text{dog}~\leq~s~\}$. But for simplicity, let us restrict attention to only those expressions $x_{i_{k}}\cdots x_{i_2}x_{i_1}\in \mathcal{L}\subseteq L$ where the last word is fixed at $x_{i_1}=\text{dog}$. In this case, the upper closure of \textit{dog} consists of all phrases in $\mathcal{L}$ of length at most $N-1$ that contain \textit{dog} as the last word. In other words, $\uparrow\{\text{dog}\}$ is equal to
\[
\{x_{i_2}\text{dog}\mid x_{i_2}\in X\}
\sqcup \{x_{i_3}x_{i_2}\text{dog}\mid x_{i_3}, x_{i_2}\in X\}
\sqcup \cdots \sqcup
\{x_{i_{N-1}}\cdots x_{i_3}x_{i_2}\text{dog}\mid x_{i_{N-1}},\ldots, x_{i_3}, x_{i_2}\in X\}.
\]
As implied early on by Equation \eqref{eq:decomp}, the passage $\mathcal{L}\to\text{Pos}(V)$ gives rise to an analogous decomposition of words associated to these expressions:
\[
\hat\rho_{\text{dog}} =\sum_{i_2}\hat\rho_{x_{i_2}\text{dog}}=\sum_{i_3, i_2}\hat\rho_{x_{i_3}x_{i_2}\text{dog}} =\cdots =\sum_{i_{N-1},\ldots,i_3,i_2}\hat\rho_{x_{i_{N-1}}\cdots x_{i_3}x_{i_2}\text{dog}}.
\]
In this way, something of the ``meaning'' of \textit{dog}---that is, the environment in which the word appears---is neatly packed into the single operator $\hat\rho_{\text{dog}}$. But as previously noted, the statistics accompanying these appearances are also essential for capturing meaning. For instance, the conditional probability of \textit{black} given that the next word is \textit{dog} contributes to the latter's meaning. We therefore wish to ``decorate'' the preorder structure $\leq$ with probabilities in such a way that the order-preserving map $\mathcal{L}\to \text{Pos}(V)$ retains knowledge of these conditional probabilities.
\[\text{dog}\leq \text{black dog}
\quad\rightsquigarrow\quad
\text{dog} \overset{\pi(\text{black}|\text{dog})}{\leq} \text{black dog}\]
These conditional probabilities arise naturally in the discussion on unit-trace reduced densities as in Inequality \eqref{eq:compare_dogs}. For any $s$ and $s'$ in $\mathcal{L}$, the containment $s\leq s'$ implies $\rho_s\geq \pi(s'\mid s) \rho_{s'}$. But unit trace operators are not comparable under the Loewner order, and so we  look for another way to  incorporate probabilities with the preorder structure. We needn't look far, however. Category theory provides a natural setting for these ideas \cite{riehl2017category,leinster2014basic,awodey2010category}. Indeed, every preordered set is an example of a category, and the function in Proposition \ref{prop:poset_map} is a contravariant functor. The ability to ``decorate'' $\leq$ with probabilities is the notion behind enriched category theory \cite{kelly1982basic}, \cite[Appendix A]{Elliott2017OnTF}. This is made precise in Theorem \ref{theorem} below, which generalizes Proposition \ref{prop:poset_map} by incorporating probabilities in the desired way. We unwind this in the next section.
  
\section{Language as a Category Enriched Over Probabilities}\label{sec:category}
The full machinery of enriched category theory is not needed for our framework, and so the discussion will be kept simple. Indeed, the only categories being considered are preordered sets, and the only category we wish to enrich over is a particular symmetric monoidal preorder.
\begin{definition}
A \emph{symmetric monoidal preorder} $(P,\leq,\cdot,1)$ is a preorder $(P,\leq)$ together with
	\begin{itemize}
		\item an element $1\in P$ called the \emph{monoidal unit}
		\item a function $\cdot\colon P\times P\to P$ called the \emph{monoidal product}.
		\end{itemize}
Moreover these data must satisfy the following properties (where we write $pq$ for $p\cdot q)$:
	\begin{itemize}
		\item for all $p,p',q,q'\in P$, if $p\leq p'$ and $q\leq q'$ then $pq\leq p' q'$,
		\item $1p=p1=p$ for all $p\in P,$
		\item $(pq)r=p(qr)$ for all $p,q,r\in P$
		\item $pq=qp$ for all $p,q\in P$ 
		\end{itemize}
\end{definition}

\noindent The main example to have in mind is the unit interval $[0,1]\subseteq\mathbb{R}$ equipped with the usual ordering $\leq$, where the monoidal product is multiplication of real numbers, and the monoidal unit is 1. In fact, $[0,1]$ can be given the structure of a closed symmetric monoidal preorder \cite[Proposition 2.1.12]{Elliott2017OnTF}, although we won't need closure here. 

\begin{definition}
Let $(\mathcal{V},\leq,\cdot, 1)$ be a symmetric monoidal preorder. A \emph{$\mathcal{V}$-enriched category} $\mathcal{C}$, or simply \emph{$\mathcal{V}$-category}, consists of the following data,
	\begin{itemize}
		\item a set $\text{ob}(\mathcal{C})$ of objects $c,d,\ldots$
		\item an element $\mathcal{C}(c,d)\in\mathcal{V}$ for every pair of objects $c$ and $d$.
	\end{itemize}
Moreover, these data must satisfy the following requirements:
	\begin{itemize}
		\item $1\leq \mathcal{C}(c,c)$ for each object $c$
		\item $\mathcal{C}(c,d)\cdot \mathcal{C}(d,e)\leq \mathcal{C}(c,e)$ for every triple of objects $c,d,e$.
	\end{itemize}
\end{definition}
\noindent There is also a notion of maps between $\mathcal{V}$-categories.

\begin{definition}Let $\mathcal{C}$ and $\mathcal{D}$ be $\mathcal{V}$-categories. A $\mathcal{V}$\emph{-functor} is a function $f\colon\text{ob}(\mathcal{C})\to\text{ob}(\mathcal{D})$ satisfying
\[\mathcal{C}(c,c')\leq \mathcal{D}(fc,fc')\]
for all objects $c,c'$ in $\mathcal{C}$.
\end{definition}

The main result below is that both language and positive semidefinite operators form $[0,1]$-categories in the desired way, and moreover there is a $[0,1]$-functor between them. The setup is the same as before. Let $X$ be a finite set, let $\pi\colon X^{N-1}\times X\to\mathbb{R}$ be any probability distribution, and let $\mathcal{L}\subseteq L\subseteq \sqcup_{k=1}^{N-1} X^k\times X$ be the subset of phrases defined in Section \ref{sec:loewner}.

\begin{proposition} The set $\mathcal{L}$ together with the following assignment for each $s,s'\in\mathcal{L}$ is a $[0,1]$-category:
\[
\mathcal{L}(s,s')=
\begin{cases}
\pi(s'|s) &\text{if $s\leq s'$}\\
0 &\text{otherwise}.
\end{cases}
\]
\end{proposition}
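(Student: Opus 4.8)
The plan is to verify directly the two axioms in the definition of a $[0,1]$-enriched category for the proposed hom-object assignment $\mathcal{L}(s,s')$. Since the monoidal unit is $1$ and the monoidal product is ordinary multiplication of real numbers, there are exactly two things to check: the identity law $1\leq\mathcal{L}(s,s)$ for every $s\in\mathcal{L}$, and the composition law $\mathcal{L}(s,s')\cdot\mathcal{L}(s',s'')\leq\mathcal{L}(s,s'')$ for every triple $s,s',s''\in\mathcal{L}$. Before that, one should confirm the assignment is well-defined, i.e.\ that $\mathcal{L}(s,s')\in[0,1]$; this is immediate since conditional probabilities lie in $[0,1]$ and $0\in[0,1]$.

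First I would dispatch the identity law. When $s=s'$ we have $s\leq s$ (reflexivity of the preorder, already established in Proposition stating $L$ is a preorder), so $\mathcal{L}(s,s)=\pi(s\mid s)$. Since $\pi(s\mid s)=1$ by the definition of conditional probability, the required inequality $1\leq\mathcal{L}(s,s)$ holds with equality.

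The composition law is the substantive step. Fix $s,s',s''\in\mathcal{L}$. If either $s\not\leq s'$ or $s'\not\leq s''$, then the left-hand product contains a factor of $0$, so $\mathcal{L}(s,s')\cdot\mathcal{L}(s',s'')=0\leq\mathcal{L}(s,s'')$ trivially, since every value of $\mathcal{L}$ is nonnegative. So assume $s\leq s'$ and $s'\leq s''$. By transitivity of $\leq$ (again from the preorder proposition), $s\leq s''$, hence $\mathcal{L}(s,s'')=\pi(s''\mid s)$. The inequality to prove therefore becomes
\[
\pi(s'\mid s)\,\pi(s''\mid s')\leq \pi(s''\mid s).
\]
I expect this to be the main obstacle, as it is the only place where the probabilistic content (rather than pure order theory) enters. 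The key observation is that containment of phrases corresponds to containment of their suffix-continuation events: if $s\leq s'$ then every occurrence of $s'$ (with its fixed continuation) is in particular an occurrence of $s$, so the event ``$s''$ occurs'' is contained in the event ``$s'$ occurs,'' which in turn is contained in ``$s$ occurs.'' Writing these as marginal probabilities and using the chain rule of conditional probability, $\pi(s'\mid s)\pi(s''\mid s')=\tfrac{\pi(s')}{\pi(s)}\tfrac{\pi(s'')}{\pi(s')}=\tfrac{\pi(s'')}{\pi(s)}=\pi(s''\mid s)$, so the inequality in fact holds with \emph{equality}. The one subtlety to handle carefully is the degenerate case where a conditioning marginal vanishes; if $\pi(s)=0$ then $s$ cannot properly sit inside any realized $s''$, and the conventions on conditional probability (interpreting $\pi(s''\mid s)$ appropriately, or noting that $\pi(s'')\leq\pi(s)=0$ forces both sides to be $0$) keep the inequality valid. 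Once the chain-rule identity is recorded together with this edge-case remark, the proof is complete.
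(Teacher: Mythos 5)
Your proposal is correct and follows the same route as the paper, which simply observes $1\leq\mathcal{L}(s,s)$ and leaves the composition inequality $\mathcal{L}(s,s')\mathcal{L}(s',s'')\leq\mathcal{L}(s,s'')$ as "simple to check." Your chain-rule computation $\tfrac{\pi(s')}{\pi(s)}\cdot\tfrac{\pi(s'')}{\pi(s')}=\pi(s''\mid s)$, together with the zero-probability edge case, is exactly the verification the paper omits.
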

\begin{proof}
Observe that $1\leq \mathcal{L}(s,s)$ for each $s$, and it is simple to check that $\mathcal{L}(s,s')\mathcal{L}(s',s'')\leq \mathcal{L}(s,s'')$ for all $s,s'$ and $s''$. 
\end{proof}

\noindent With this choice of enrichment,\footnote{The notation $\pi(s'\mid s)$ is used as shorthand for the conditional probability $\pi(ts|s)$ whenever $s'=ts$ for some phrase $t$. For example, in this section we use $\pi(\text{black dog}\mid \text{dog})$ to denote the conditional probability $\pi(\text{black}\mid\text{dog})=\pi(\text{black dog})/\pi(\text{dog}).$} there is a ``morphism'' between two expressions only if one is contained in the other, and moreover that morphism is labeled with the conditional probability of containment. By a similar argument, the trace on positive semidefinite operators gives rise to a $[0,1]$-category structure on operators assigned to phrases $s\in\mathcal{L}$. In what follows, let $\mathcal{D}\subseteq \text{Pos}(V)$ denote the image of the function $\mathcal{L}\to\text{Pos}(V)$ defined by $s\mapsto\hat\rho_s$.

\begin{proposition}The set $\mathcal{D}$ together with the following assignment for each $\hat\rho_s,\hat\rho_{s'}\in\mathcal{D}$ is a $[0,1]$-category:
\[
\mathcal{D}(\hat\rho_s,\hat\rho_{s'})=
\begin{cases}
\tr\hat\rho_{s'}/\tr\hat\rho_{s} &\text{if $s\leq s'$}\\
0 &\text{otherwise}.
\end{cases}
\]
\end{proposition}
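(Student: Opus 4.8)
The plan is to verify the two defining axioms of a $[0,1]$-category directly, reducing everything to facts already in hand. The decisive tool is Proposition \ref{prop:trace}, which identifies $\tr\hat\rho_s$ with the marginal probability $\pi(s)$. Using it, whenever $s\leq s'$ the hom-object becomes a ratio of marginals,
\[
\mathcal{D}(\hat\rho_s,\hat\rho_{s'})=\frac{\tr\hat\rho_{s'}}{\tr\hat\rho_{s}}=\frac{\pi(s')}{\pi(s)}=\pi(s'\mid s)=\mathcal{L}(s,s'),
\]
where the penultimate equality is the footnote convention $\pi(s'\mid s)=\pi(s')/\pi(s)$ for $s'=ts$. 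This computation does two things at once: it shows the assignment lands in $[0,1]$, since $s\leq s'$ forces every occurrence of $s'$ to contain an occurrence of $s$ and hence $\pi(s')\leq\pi(s)$, so the ratio is nonnegative and at most $1$; and it exhibits $\mathcal{D}$ as carrying exactly the same hom-values as $\mathcal{L}$, which is why the authors can say ``by a similar argument.''

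From here the axioms are immediate. For the unit axiom, $\mathcal{D}(\hat\rho_s,\hat\rho_s)=\tr\hat\rho_s/\tr\hat\rho_s=1$, so $1\leq\mathcal{D}(\hat\rho_s,\hat\rho_s)$ holds with equality. For composition, I would treat two cases. If $s\leq s'$ and $s'\leq s''$, transitivity of the subsequence preorder gives $s\leq s''$, and the trace ratios telescope,
\[
\mathcal{D}(\hat\rho_s,\hat\rho_{s'})\cdot\mathcal{D}(\hat\rho_{s'},\hat\rho_{s''})=\frac{\tr\hat\rho_{s'}}{\tr\hat\rho_{s}}\cdot\frac{\tr\hat\rho_{s''}}{\tr\hat\rho_{s'}}=\frac{\tr\hat\rho_{s''}}{\tr\hat\rho_{s}}=\mathcal{D}(\hat\rho_s,\hat\rho_{s''}),
\]
so the required inequality holds, in fact as an equality. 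If instead either $s\leq s'$ or $s'\leq s''$ fails, the left-hand product contains a zero factor, and $0\leq\mathcal{D}(\hat\rho_s,\hat\rho_{s''})$ holds trivially. This exhausts the triples $s,s',s''$ and completes the verification.

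The one point I expect to require genuine care is \emph{well-definedness}, and I would flag it as the main obstacle rather than any calculation. The objects of $\mathcal{D}$ are operators, yet the formula is written through phrase labels, and as noted below Definition \ref{def:density} the map $s\mapsto\hat\rho_s$ need not be injective. If two distinct phrases $s,t$ satisfy $\hat\rho_s=\hat\rho_t$ but have different upper closures, then the condition ``$s\leq s'$'' could hold for one label and fail for the other, leaving $\mathcal{D}(\hat\rho_s,-)$ ambiguous; the traces still agree, but the case split does not. I would resolve this either by regarding the objects of $\mathcal{D}$ as the phrases themselves (so that $\mathcal{D}$ is literally $\mathcal{L}$ re-expressed via the trace, and the identification $\mathcal{D}(\hat\rho_s,\hat\rho_{s'})=\mathcal{L}(s,s')$ of the first paragraph is an isomorphism of $[0,1]$-categories), or by restricting to distributions $\pi$ under which the labeling is unambiguous. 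Once the objects are understood in this way, all three axioms follow exactly as above.
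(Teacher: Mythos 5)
Your verification is correct and takes essentially the same route as the paper, whose proof simply asserts that the two axioms are ``simple to check'': your identification $\tr\hat\rho_s=\pi(s)$ via Proposition \ref{prop:trace}, the telescoping of trace ratios for the composition axiom, and the trivial zero case are exactly that check made explicit. Your well-definedness caveat about the non-injectivity of $s\mapsto\hat\rho_s$ is a legitimate subtlety the paper glosses over, and your proposed resolution (taking the objects of $\mathcal{D}$ to be indexed by phrases) is the natural reading of the construction.
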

\begin{proof}
Observe that $1\leq \mathcal{D}(\hat\rho_s,\hat\rho_s)$ for each $s$, and it is simple to check that $\mathcal{D}(\hat\rho_s,\hat\rho_{s'})\mathcal{D}(\hat\rho_{s'},\hat\rho_{s''})\leq \mathcal{D}(\hat\rho_s,\hat\rho_{s''})$ for all $s,s'$ and $s''$ in $\mathcal{L}$.
\end{proof}
\noindent Recall from Proposition \ref{prop:trace} that for each $s\in\mathcal{L}$ the trace of the reduced density $\hat\rho_s$ is marginal probability, $\tr\hat\rho_s=\pi(s)$. As a result, $\tr\hat\rho_{s'}/\tr\hat\rho_{s}=\pi(s')/\pi(s)=\pi(s'|s)$ whenever $s\leq s'$ and so $\mathcal{L}(s,s')\leq \mathcal{D}(\hat\rho_s,\hat\rho_{s'})$ for all $s,s'\in\mathcal{L}$. This proves the following theorem, which can be seen as an enrichment of Proposition \ref{prop:poset_map}.

\begin{theorem}\label{theorem}
The function $\mathcal{L}\to\mathcal{D}$ defined by $s\mapsto \hat\rho_s$ is a $[0,1]$-functor.
\end{theorem}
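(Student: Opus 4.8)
The plan is to verify directly that the function $s \mapsto \hat\rho_s$ satisfies the single defining condition of a $[0,1]$-functor, namely that $\mathcal{L}(s,s') \leq \mathcal{D}(\hat\rho_s, \hat\rho_{s'})$ for every pair $s, s' \in \mathcal{L}$. Since both enrichment values are defined by a case split on whether $s \leq s'$, I would first dispose of the trivial case: if $s \not\leq s'$, then $\mathcal{L}(s,s') = 0$, and since $\mathcal{D}(\hat\rho_s,\hat\rho_{s'}) \geq 0$ always (it is either a ratio of nonnegative traces or $0$), the inequality $0 \leq \mathcal{D}(\hat\rho_s,\hat\rho_{s'})$ holds automatically.

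The substantive case is $s \leq s'$. Here the key observation, already assembled in the paragraph preceding the theorem, is the identification of both hom-objects as the \emph{same} conditional probability. By Proposition \ref{prop:trace}, the trace of each reduced density is the marginal probability of its phrase, so $\tr\hat\rho_s = \pi(s)$ and $\tr\hat\rho_{s'} = \pi(s')$. Substituting into the definition of $\mathcal{D}$ gives
\[
\mathcal{D}(\hat\rho_s, \hat\rho_{s'}) = \frac{\tr\hat\rho_{s'}}{\tr\hat\rho_s} = \frac{\pi(s')}{\pi(s)} = \pi(s' \mid s),
\]
which is exactly the value $\mathcal{L}(s,s')$. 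Thus in the nontrivial case the required inequality is in fact an \emph{equality}, and the $[0,1]$-functor condition $\mathcal{L}(s,s') \leq \mathcal{D}(\hat\rho_s,\hat\rho_{s'})$ holds trivially.

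Two small points deserve care, and I expect the second to be the only genuine obstacle. First, one must note that $s \leq s'$ in $\mathcal{L}$ precisely means $s'$ extends $s$ by prepending a phrase $t$ (so $s' = ts$ in the shorthand of the footnote), which is what makes $\pi(s')/\pi(s)$ equal to the conditional probability $\pi(s' \mid s)$; this is just the definition recalled in the paper. Second, and more delicate, is well-definedness of the map on $\mathcal{D}$: the enrichment $\mathcal{D}(\hat\rho_s,\hat\rho_{s'})$ is written in terms of representatives $s, s'$, but distinct phrases can yield the same operator $\hat\rho_s$, so one should confirm that the assigned value depends only on the operators and not on the chosen phrases. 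Since the trace is a function of the operator alone, $\tr\hat\rho_{s'}/\tr\hat\rho_s$ is manifestly determined by the pair of operators once the order relation is fixed, and the preceding proposition establishing that $\mathcal{D}$ is a genuine $[0,1]$-category already presupposes this consistency. Given that groundwork, the proof of the theorem reduces to the one-line substitution above, so I would keep it correspondingly brief.
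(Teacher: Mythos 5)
Your proof is correct and follows essentially the same route as the paper, which likewise derives the theorem in one line from Proposition \ref{prop:trace} by computing $\mathcal{D}(\hat\rho_s,\hat\rho_{s'})=\tr\hat\rho_{s'}/\tr\hat\rho_s=\pi(s'\mid s)=\mathcal{L}(s,s')$ when $s\leq s'$ (the case $s\not\leq s'$ being trivial). Your added remark on well-definedness of the enrichment on $\mathcal{D}$ is a sensible point of care but does not change the argument.
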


We have therefore found a suitable home for modeling the passage from statistics in language to linear operators. To make this conclusion explicit, we quickly revisit the remarks given towards the end of Section \ref{sec:loewner}. If $s$ and $s'$ are phrases in language satisfying $s\leq s'$, then the enriched categorical structure is given by conditional probability, $\mathcal{L}(s,s')=\pi(s'|s)$. Recall from Inequality \eqref{eq:order2} that this same probability arises in the relationship between the unit-trace reduced density operators associated to each phrase, $\rho_{s}\geq \pi(s'|s)\rho_{s'}$. Theorem \ref{theorem} crystallizes the precise way in which these ideas connect. Indeed, if $s\leq s'$ then $\hat\rho_{s'}\leq\hat\rho_s$ which implies that $\mathcal{D}^{\text{op}}(\hat\rho_{s'},\hat\rho_s):=\mathcal{D}(\hat\rho_s,\hat\rho_{s'})=\tr\hat\rho_{s'}/\tr\hat\rho_{s}=\pi(s'|s)$, where the ``op'' denotes the opposite $[0,1]$-category of $\mathcal{D}.$ This recovers the intuitive idea of decorating the network of relationships between phrases in language with the appropriate conditional probabilities. The linear algebra in Section \ref{sec:text_to_density} thus prescribes a principled method of assigning text to (unnormalized) reduced density operators that preserves a simple hierarchical structure in language as well as the statistics therein. 
	\[
	\begin{tikzcd}[row sep = tiny]
		  s					\ar{dd}[swap]{\pi(s'\mid s)}
		& \hat\rho_s
		\\
		  ~					\ar[shorten >=1ex, |->, shorten <=1ex]{r}
		& ~
		\\
		  s'
		& \hat\rho_{s'}		\ar{uu}[swap]{\pi(s'\mid s)}
	\end{tikzcd}
	\]
Since the image of the functor $\mathcal{L}\to\mathcal{D}$ has more structure than its domain, we expect that operators $\rho_s$ associated to phrases $s$ carry additional information about the language, in addition to the simple form of entailment modeled here. The spectral information of $\rho_s$, for instance, may be one such source.


\subsection{Conclusion}\label{ssec:conclusion}
The hierarchy modeled in this work comes directly from the sequential structure of language. So while we can model the notion that \textit{dog} is a more general concept than \textit{small black dog}, the theory does not yet provide a way to compare phrases that aren't comparable under the preorder in $\mathcal{L}$. For example, one may not conclude that \textit{mammal} abstracts the notion of \textit{dog} since \textit{mammal} does not contain \textit{dog} as a subsequence. But considering all expressions that contain both \textit{mammal} and \textit{dog}, as in the discussion surrounding Figure \ref{fig:bias}, suggests a relationship between them. Exploring more complex hierarchies of this type  is left for future work. In this direction, we make the observation that if $\hat\rho_{s'}\leq\hat\rho_s$ then for any suffix $x,$ we have that $\langle\hat\rho_{s'}x,x\rangle=\pi(s'x)\leq \pi(sx)=\langle\hat\rho_{s}x,x\rangle$. In particular, if $\pi(s'x)>0$ for some suffix $x,$ then $\pi(sx)>0$ as well, which is to say that any valid continuation on the right of $s'$ is also a valid continuation of $s$. (In Example \ref{ex:dog1}, for instance, one sees that $\hat\rho_{\text{black cat}}\leq \hat\rho_\text{dog}$, and that the set of suffixes of \textit{black cat}, namely $\{\textit{runs}\}$, is a subset of the set of suffixes of \textit{dog}.) So while $s$ and $s'$ may not be comparable under subsequence containment, there is a clear relationship between their ``right ideals,'' to borrow from the algebraic perspective.  Understanding this algebraic connection is left for future work. There are also additional opportunities to expand the framework using ideas from category theory. For instance, the function $x\mapsto-\log(x)$ provides a mapping $[0,1]\to[0,\infty]$, suggesting that our framework may be reinterpreted in the theory of generalized metric spaces \cite{lawvere1973,Lawvere86takingcategories,willerton2013tight}. Finally, the theory proposed in this paper fits into a larger investigation of language modeling with tensor networks. One quickly notices that the dimension of the vector spaces involved grow exponentially with the size of the vocabulary $X$. Realizing and manipulating $\psi$ on a computer thus quickly becomes infeasible for real-world datasets. A similar sentiment holds for our reduced densities such as $\hat\rho_{x_{i_1}}$, which may operate on an ultra large-dimensional space. As discussed in Section \ref{sec:intro}, we propose these issues may be addressed by approximating $\psi$ by a tensor network factorization $\psi_{\text{net}}$, which should be chosen such that it can be computed efficiently, can faithfully model the statistics of language, and can easily generalize from a training corpus to unseen samples. We view the tensor network language models of \cite{PTV2017,PV2017} as source of inspiration in this direction.

\bibliographystyle{alpha}
\bibliography{references}{}
\end{document}